\newcommand{\Rmnum}[1]{\expandafter\@slowromancap\romannumeral #1@}
\newtheorem{assumption}{Assumption}
\title{\LARGE \bf
Safety Meets Speed: Accelerated Neural MPC with Safety Guarantees and No Retraining
}
\author{Kaikai Wang, Tianxun Li, Liang Xu, Qinglei Hu, Keyou You
\thanks{The work was supported in part by the National Natural Science Foundation of China under Grant 62373239, 62333011 and 62461160313, and the Project of Science and Technology Commission of Shanghai Municipality under Grant 22JC1401401.}
\thanks{Kaikai Wang, Liang Xu are with the School of Future Technology, Shanghai University, Shanghai, China (e-mail:\break
{\tt\small{\{\href{mailto:13102005639@shu.edu.cn}{13102005639}, \href{mailto:liang-xu@shu.edu.cn}{liang-xu}\}@shu.edu.cn}}).}
\thanks{Tianxun Li, Keyou You are with the Department of Automation, Tsinghua University, Beijing, China (e-mail:\break {\tt\small{\href{mailto:litx23@mails.tsinghua.edu.cn}{litx23@mails.tsinghua.edu.cn}}};
{\tt\small{\href{mailto:youky@tsinghua.edu.cn}{youky@tsinghua.edu.cn}}}).
}
\thanks{Qinglei Hu is with the School of Automation Science and Electrical Engineering, Beihang University, Beijing, China (e-mail:
{\tt\small{\href{mailto:huql_buaa@buaa.edu.cn}{huql\_buaa@buaa.edu.cn}}}).
}
}
\begin{document}


\maketitle
\thispagestyle{empty}
\pagestyle{empty}

\begin{abstract}
While Model Predictive Control (MPC) enforces safety via constraints, its real-time execution can exceed embedded compute budgets. We propose a Barrier-integrated Adaptive Neural Model Predictive Control (BAN-MPC) framework that synergizes neural networks' fast computation with MPC's constraint-handling capability. To ensure strict safety, we replace traditional Euclidean distance with Control Barrier Functions (CBFs) for collision avoidance. We integrate an offline-learned neural value function into the optimization objective of a Short-horizon MPC, substantially reducing online computational complexity. Additionally, we use a second neural network to learn the sensitivity of the value function to system parameters, and adaptively adjust the neural value function based on this neural sensitivity when model parameters change, eliminating the need for retraining and reducing offline computation costs. The hardware in-the-loop (HIL) experiments on Jetson Nano show that BAN-MPC solves 200 times faster than traditional MPC, enabling collision-free navigation with control error below 5\% under model parameter variations within 15\%, making it an effective embedded MPC alternative.

\end{abstract}
\begin{keywords}
Optimization and optimal control, machine learning for robot control, robust/adaptive control.

\end{keywords}

\section{INTRODUCTION}
Robots are widely deployed in search and rescue, obstacle avoidance navigation, and transportation scenarios due to their capability to replace or assist humans in complex tasks. Ensuring safety constitutes a fundamental requirement in robotic control for such applications. However, constrained working environments and onboard computational resources often compel systems to strike a balance between computational efficiency and safety assurance. Addressing these challenges necessitates developing safe, fast, and adaptive control strategies.

\subsection{Related Work}
Safe control in multi-obstacle or unstructured environments remains challenging. Traditional methods, such as the Dynamic Window Approach \cite{c5}, precomputed motion primitive libraries \cite{c7}, and collision-free flight corridors \cite{c9}, are ineffective for fast-moving robots or more complex environments. Reinforcement Learning (RL) \cite{y1} \cite{y2} achieves effective obstacle avoidance in robotic control by learning reward-optimizing actions from historical experiences. However, ensuring both safety and adaptability remains a core challenge in this domain. Model Predictive Control (MPC) ensures safety by integrating path planning, motion control, and constraint handling through constrained optimization. While applied to robotic obstacle avoidance in \cite{y3} \cite{y4}, most MPC methods employ rudimentary Euclidean norm-based safety criteria activated only near obstacles, potentially leading to insufficient safety margins or delayed reactions.

Recent studies focus on Control Barrier Functions (CBFs), which offer forward invariance and safety assurances, and have been applied across safe control domains including bipedal \cite{c10}, mobile \cite{a7}, and aerial robots \cite{c13}. Integrating CBFs with strict safety guarantees into MPC has created CBF-MPC \cite{c14}, a versatile tool for safe control, which inspired our work. However, as model complexity, system safety constraints, and online prediction steps increase, CBF-MPC may lead to significant computational time, which can affect the actual control performance.

Neural network-based Approximate MPC (AMPC) has become an effective solution \cite{c19}. Recent studies show that by avoiding online optimization, AMPC significantly reduces computational time, but typically loses the explicit constraint satisfaction guarantees. Orrico et al. \cite{a4} integrate neural value functions with Short-horizon MPC without addressing safety or adaptability. Xu et al. \cite{a10} penalize safety constraints in loss functions yet fail to ensure strict safety. Mitigation strategies --- Dijkstra-projected output constraints \cite{c22}, post-AMPC safety filters \cite{c23} --- compromise MPC performance. Consequently, accelerating the online computation of MPC while ensuring strict system safety remains an open problem.

Enhancing parameter adaptability of learning-based methods remains challenging. While techniques exist --- meta-learning \cite{c26} for knowledge transfer, domain adaptation \cite{c27} for reduced data dependency --- necessitate offline retraining with limited efficacy. Furthermore, adaptive approaches like warm-started online optimization \cite{c28} and RL-based network retraining \cite{c29} prove infeasible for fast-moving embedded robots demanding real-time control.

\subsection{Contributions}

This work investigates online path planning and motion control for obstacle avoidance, targeting safe, fast, and adaptive control in resource-constrained scenarios. We propose the BAN-MPC framework that enforces obstacle avoidance constraints via Control Barrier Functions integrated into MPC to ensure strict safety, while leveraging neural value functions and neural sensitivities to enhance computational efficiency and adaptability. HIL experiments on Jetson Nano demonstrate BAN-MPC's superior computational speed, safety, and adaptability over conventional MPC, establishing it as an effective embedded MPC alternative. The primary contributions are summarized below:

\begin{itemize}
\item Unlike computation-intensive CBF-MPC methods \cite{c14}, our framework embeds offline-learned neural value functions into Short-horizon MPC objectives, enabling rapid online solving of constrained optimization while preserving original state and input constraints.

\item Compared to neural network-based AMPC \cite{c19} \cite{a10} and RL \cite{y1} \cite{y2}, our approach provides strictly provable safety guarantees through CBF constraints. Innovatively, a second neural network learns neural sensitivities, enabling adaptive adjustments without network retraining when system parameters change, substantially reducing offline costs and enhancing policy generalization.

\item We deploy BAN-MPC on resource-constrained embedded platforms and validate its applicability through HIL experiments with unicycle and quadrotor.
\end{itemize}

The remainder of this paper is organized as follows: Section II presents the preliminary concepts. Section III introduces the BAN-MPC framework. Section IV provides the HIL experiments. Finally, Section V concludes the paper.

\section{PRELIMINARIES}

In this section, the fundamental concepts of CBF-MPC and parameter sensitivity are reviewed to establish a foundation for the proposed safe control framework presented later.
\subsection{CBF-MPC}

We consider a discrete-time nonlinear system:\begin{align}
x(t+1)=f_{\theta_{\text{nom}}}(x(t),u(t))
\end{align}
where $x(t)\in\mathbb{R}^{n_x}$ represents the system state, $u(t)\in\mathbb{R}^{n_u}$ is the control input at the current time $t$. The function $f:\mathbb{R}^{n_x}\times\mathbb{R}^{n_u}\to\mathbb{R}^{n_x}$ is a continuous dynamic function with nominal dynamics parameters $\theta_{\text{nom}}\in\Theta\subseteq\mathbb{R}^{q}$, such as the robot’s mass or the tire friction coefficient of an autonomous vehicle. Let the safety set for (1) be encoded as the superlevel set $\mathcal{S}$ of the function $H:\mathcal{X}\to\mathbb{R}$. 
\begin{align}
\mathcal{S}&=\{x\in\mathcal{X},H(x)\geq0\}\\
\partial \mathcal{S} &= \{x \in \mathcal{X}, H(x) = 0\}\\
\text{Int}(\mathcal{S}) &= \{x \in \mathcal{X}, H(x) > 0\}
\end{align}
Later, it will also be assumed that $\mathcal{S}$ has a nonempty interior and no isolated points, that is,
\begin{align}
\text{Int}(\mathcal{S}) \neq \emptyset \quad \text{and} \quad \overline{\text{Int}(\mathcal{S})} = \mathcal{S}.
\end{align}

\newtheorem{definition}{Definition}
\begin{definition}[Control Barrier Function \cite{a7}]    
The function $H(x):x\to\mathbb{R}$ is said to be a Control Barrier Function (CBF) if: $H(x_0) \geq 0$ and there exists a control input $u_k \in \mathcal{U}$ such that
\begin{align}
\Delta H(x_k,u_k)+\gamma H(x_k)\geq0, \quad 0 < \gamma \leq 1,
\end{align}
where $\Delta H(x_k,u_k)=H(x_{k+1})-H(x_{k})$, and $\gamma$ is the decay rate. 
\end{definition}

Based on this definition, we can establish a forward invariant safety condition as follows.

\newtheorem{lemma}{Lemma}
\begin{lemma}[Safety Condition \cite{a7}]    
Given a system (1), with a safe set $\mathcal{S}$ (2), and a CBF $H:\mathcal{S}\to\mathbb{R}$ defined by (6), for any ${x}_k\in\mathcal{X}$, any control input ${u}_k\in\mathcal{U}$, that satisfies
$\mathcal{U}_{cbf}=\{{u}_k\in\mathcal{U}: \Delta H({x}_k,{u}_k)+\gamma H({x}_k)\geq0\}$, renders the safe set $\mathcal{S}$ forward invariant for (1).
\end{lemma}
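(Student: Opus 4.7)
The plan is to establish forward invariance of $\mathcal{S}$ by induction on the time step $k$, using the discrete CBF inequality from Definition 1 as the engine that propagates non-negativity of $H$ from one step to the next. Forward invariance here means that whenever $x_0 \in \mathcal{S}$ and every applied $u_k$ is drawn from $\mathcal{U}_{cbf}$, the closed-loop trajectory of (1) satisfies $x_k \in \mathcal{S}$ for all $k \geq 0$.

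First I would dispatch the base case, which is immediate from Definition 1: $H(x_0) \geq 0$, so $x_0 \in \mathcal{S}$. For the inductive step, I would fix $k$, assume $x_k \in \mathcal{S}$ (equivalently $H(x_k) \geq 0$), and pick any $u_k \in \mathcal{U}_{cbf}$. Expanding $\Delta H(x_k,u_k) = H(x_{k+1}) - H(x_k)$ in the defining inequality of $\mathcal{U}_{cbf}$ and rearranging gives
\[
H(x_{k+1}) \;\geq\; (1-\gamma)\, H(x_k).
\]
Since $0 < \gamma \leq 1$, the coefficient $(1-\gamma)$ is non-negative, and combined with $H(x_k) \geq 0$ this yields $H(x_{k+1}) \geq 0$, i.e.\ $x_{k+1} \in \mathcal{S}$. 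Iterating closes the induction and establishes forward invariance.

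There is no serious obstacle: the argument is purely algebraic. The only conceptual point worth highlighting is that the restriction $0 < \gamma \leq 1$ imposed in (6) is precisely what makes $(1-\gamma) \geq 0$, so that the one-step bound propagates the sign of $H$ rather than flipping it; if $\gamma$ were allowed to exceed $1$, the inequality would become vacuous or negative and the inductive step would fail. The topological assumption (5) on $\mathcal{S}$ plays no role in this discrete-time invariance argument; it is relevant only for converse or regularity-type statements about CBFs.
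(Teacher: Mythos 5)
Your induction argument is correct and is the standard proof of this discrete-time CBF invariance result: the paper itself states this lemma as a cited preliminary from \cite{a7} and provides no proof, so there is nothing internal to compare against, but your one-step bound $H(x_{k+1}) \geq (1-\gamma)H(x_k)$ together with $0 < \gamma \leq 1$ is exactly the canonical argument. Your closing observations --- that $\gamma \leq 1$ is what keeps the coefficient non-negative, and that the topological assumption (5) is irrelevant to the discrete-time invariance claim --- are both accurate.
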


Integrating CBF as a constraint into the MPC framework, the resulting CBF-MPC \cite{c14} enables strict safety control through forward prediction, as expressed by the following:
\begin{subequations}
\begin{align}
V_{\text{MPC}}(x)&=\min_{\left\{x_{k}\right\}_{0}^{N},\left\{u_{k}\right\}_{0}^{N-1}}L\left(\{x_k\}_0^{N},\{u_k\}_0^{N-1}\right)\\
\mathrm{s.t.}\quad &x_{k+1}=f\left(x_{k},u_{k}\right)\quad\forall k\in[0,\dots,N-1], \\
&x_k\in\mathcal{X}_k\quad\forall k\in[0,\ldots,N], \\
&u_k\in\mathcal{U}_k\quad\forall k\in[0,\ldots,N-1], \\
&x_{0}=x,\\
\Delta H(&{x}_k,{u}_k)+\gamma H({x}_k)\geq0\ \quad \forall k \in [0, N-1].
\end{align}
\end{subequations}
where $k$ denotes the discrete-time index within the Optimal Control Problem (OCP), and $N$ represents the horizon length. The sequences $\{x_k\}_0^N$ and $\{u_k\}_0^{N-1}$ are the time series of state and input decision variables, while $\mathcal{X}_{k}$ and $\mathcal{U}_{k}$ denote the state and input constraints. $L\left(\{x_k\}_0^{N},\{u_k\}_0^{N-1}\right)$ represents the total cost, $V_{\text{MPC}}(x)$ is the optimal value function, and (7f) indicates the safety constraints imposed by the CBF.

\begin{figure*}
    \centering
     \vspace{-2mm}
    \includegraphics[width=1\linewidth]{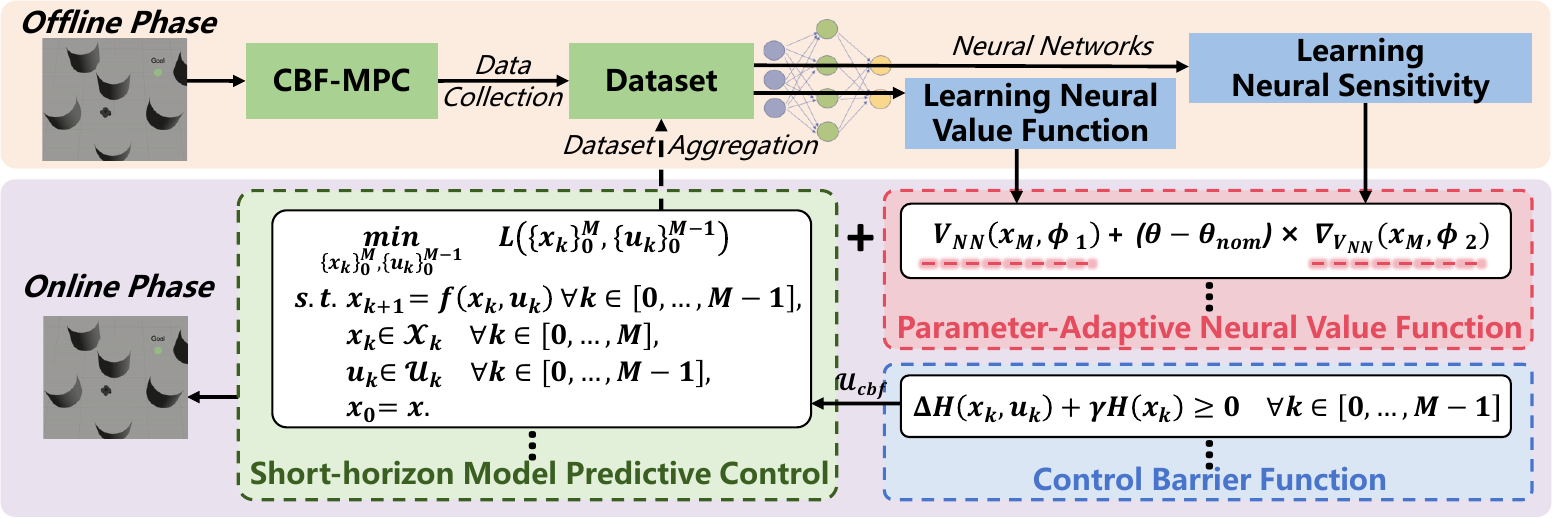}
    \caption{\textbf{Overview of BAN-MPC.} The framework is divided into an offline phase and an online phase. In the offline phase, training data is generated using CBF-MPC, and the neural value function and neural sensitivity are learned using neural networks. In the online phase, the neural value function and neural sensitivity are integrated into the optimization objective of Short-horizon MPC, with CBF ensuring strict safety guarantees.} 
    \vspace{-5mm}
    \label{fg:Figure_1}
\end{figure*}

\subsection{Parameter Sensitivity of MPC}

Sensitivity analysis measures the sensitivity of the optimal solution or value function to parameter changes. For simplicity, the MPC problem (7) is represented as a parameterized nonlinear programming (NLP) problem with parameter $\theta$:
\begin{subequations}
\begin{align}
V^{*}(\theta)=\min_{\mathbf{w}}J(\mathbf{w},\theta)\\
\mathrm{s.t.}\quad c(\mathbf{w},\theta)=0\\
g(\mathbf{w},\theta)\leq0
\end{align}
\end{subequations}
where $\mathbf{w}$ represents all decision variables, and the $\theta$ includes parameters such as the initial state and dynamic model parameters of (1). The Lagrangian function of (8) is given by:
\begin{align}
\mathcal{L}(\mathbf{w},\theta,\lambda,\mu):=J(\mathbf{w},\theta)+\lambda^\mathsf{T}c(\mathbf{w},\theta)+\mu^\mathsf{T}g(\mathbf{w},\theta)
\end{align}
where $\lambda$ and $\mu$ denote the Lagrange multipliers for constraints (8b) and (8c), respectively. The associated KKT conditions are:
\begin{subequations}
\begin{align}
\nabla_\mathbf{w} \mathcal{L}(\mathbf{w}, \theta, \lambda, \mu) &= 0\\
c(\mathbf{w}, \theta) &= 0  \\
g(\mathbf{w}, \theta) &\leq 0  \\
\mu_j g_j(\mathbf{w}, \theta) &= 0,\quad \mu_j \geq 0\quad \forall j 
\end{align}
\end{subequations}
 
For initial condition $\theta$, any point $\mathbf{s}^*(\theta) =[\mathbf{w}^*,\lambda^*,\mu^*]^\mathsf{T}$ satisfying these conditions constitutes a KKT point for $\theta$. Define the active constraint set $g_\mathbb{A}(\mathbf{w},\theta) \subseteq g(\mathbf{w},\theta)$ where $g_\mathbb{A}(\mathbf{w},\theta)=0$ at solution points. When $\mu_\mathbb{A}>0$ (indicating no weakly active constraints), strict complementarity holds, reducing the KKT system to:
\begin{align}
 \varphi(\mathbf{s}(\theta),\theta)=\begin{bmatrix}\nabla_\mathbf{w}\mathcal{L}(\mathbf{w},\theta,\lambda,\mu)\\c(\mathbf{w},\theta)\\g_\mathbb{A}(\mathbf{w},\theta)\end{bmatrix}=0
\end{align}

In performing parameter sensitivity analysis and derivation, model parameter variations need to be systematically handled. It is assumed that the cost and constraints are twice continuously differentiable in the vicinity of the KKT point, satisfying necessary conditions.

\begin{assumption}
The cost and constraints in equation (8) are twice continuously differentiable in the neighborhood of the KKT point $\mathbf{s}^*(\theta)$. The linear independence constraint qualification (LICQ), the second-order sufficient conditions (SOSC), and the strict complementarity of the solution vector $\mathbf{s}^{*}(\theta)$ hold.
\end{assumption}

\begin{assumption}
For all parameter perturbations $\Delta\theta$ in a neighborhood of 
$\theta$, the set of active constraints remains unchanged; that is, the KKT points $s^*(\theta)$ and $s^*(\theta+\Delta\theta)$ share the same active set $g_\mathbb{A}$.
\end{assumption}

Given Assumption 1 and 2, for a parameter perturbation $\Delta \theta$ around $\theta$, there exists a unique continuous vector function $\mathbf{s}^*(\theta+\Delta \theta)$, which is the KKT point of the problem with parameter $\theta+\Delta \theta$. By applying the implicit function theorem to the gradients of the KKT conditions, a first-order prediction of the feasible solution can be obtained \cite{c30}.
\begin{align}
    \mathbf{s}^*(\theta+\Delta \theta)\approx\mathbf{s}^*(\theta)+\frac{\partial\mathbf{s}^*}{\partial \theta}\Delta \theta
\end{align}
where the gradient $\frac{\partial\mathbf{s}^*}{\partial \theta}$ represents the sensitivity to parameter changes. Using the predicted solution $\mathbf{s}^*(\theta+\Delta \theta)$, the corresponding optimal value function ${V}^*(\theta+\Delta \theta)$ can be derived.

\section{BARRIER-INTEGRATED ADAPTIVE NEURAL MODEL PREDICTIVE CONTROL}

In this section, we present the overall framework of BAN-MPC and provide a detailed explanation of the learning methods for the neural value function and its sensitivity. Fig. \ref{fg:Figure_1} provides an overview of BAN-MPC, which consists of two stages: the offline phase and the online phase.

To accelerate the online computation of CBF-MPC, we integrate neural networks into our framework. In the offline phase, instead of using neural networks to approximate the entire MPC policy as in AMPC, our learning objective is to offline learn the cost function. Using the CBF-MPC framework in OCP (7), we collect training data and employ the neural network $V_{NN}(x)$ to approximate the optimal value function $V_{\text{MPC}}(x)$ in OCP (7). To improve the adaptability of the control strategy, we extend the adaptive method in \cite{a1}. A second neural network, $\nabla_{V_{NN}}(x)$, is used to offline learn the sensitivity of the value function to parameters. When model parameters change, the neural sensitivity $\nabla_{V_{NN}}(x)$ adaptively adjusts $V_{NN}(x)$ without retraining the value function.

In the online phase, we propose shortening the online prediction horizon of the CBF-MPC problem (7) ($M\ll N$), transforming it into a short-sighted model predictive controller to minimize online computation time. The prediction performance for the remaining $N-M$ steps will be represented by the the parameter-adaptive neural value function $V_{\text{BAN-MPC}}(x,\theta)=V_{\text{NN}}(x)+(\theta-\theta_{\text{nom}})\times\nabla_{V_{\text{NN}}}(x)$. The proposed BAN-MPC problem can be formulated as follows:
\begin{subequations}
\begin{align}
\min_{\left\{x_{k}\right\}_{0}^{M},\left\{u_{k}\right\}_{0}^{M-1}}&L\left(\{x_k\}_0^{M},\{u_k\}_0^{M-1}\right)+V_{\text{BAN-MPC}}(x_M,\theta)\\
\mathrm{s.t.}\quad x_{k+1}=&\;f\left(x_{k},u_{k}\right)\quad\forall k\in[0,\dots,M-1], \\
x_k\in &\;\mathcal{X}_k\quad\forall k\in[0,\ldots,M], \\
u_k\in &\;\mathcal{U}_k\quad\forall k\in[0,\ldots,M-1], \\
x_{0}= &\;x,\\
\Delta H({x}_k,{u}_k)&+\gamma H({x}_k)\geq0\ \quad \forall k \in [0, M-1].
\end{align}
\end{subequations}
where (13a) represents the optimization objective incorporating the neural value function and neural sensitivity, while (13f) defines the obstacle avoidance constraints using CBF. While Assumption 1 requires second-order differentiable CBF constraints, obstacle avoidance tasks typically involve $Q$ non-smooth safety functions $H^{(q)}$. We adopt the log-sum-exp soft minimum operation to smoothly compose multiple safety functions,
\begin{subequations}
\begin{align}
H(x)&= \operatorname{softmin}_{\rho}(H^{(1)}(x),H^{(2)}(x),\ldots,H^{(Q)}(x))\\
&=-\frac{1}{\rho}\,\log\!\Bigl(\,\sum_{q=1}^{Q} e^{-\rho H^{(q)}(x)}\Bigr)
\end{align}
\end{subequations}
where $\rho>0$, and the $C^{\infty}$ property of log-sum-exp ensures that both the composite $H(x)$ and the resulting CBF constraint (13f) are twice continuously differentiable. For stricter constraints, one can adopt the exact dual-cone formulation \cite{x1} to eliminate the approximation entirely. Although this study focuses on static obstacles, adding obstacle-state sensing and trajectory prediction enables the safety function $H^{(q)}(x)$ to be reformulated as $H^{(q)}(x,t)$; embedding time in the CBF constraint $\Delta H({x}_k,{u}_k,t_k)+\gamma H({x}_k,t_k)\geq0$ extends the method to moving obstacles.

\subsection{Efficient Learning of Neural Value Function} 

Short-horizon MPC, which focuses on short-term costs within the predictive horizon, significantly reduces the complexity of the CBF-MPC Problem (7) and accelerates computation, but its short-sighted nature results in substantial degradation in control performance. Therefore, we propose leveraging neural networks to learn the MPC value function (termed the neural value function) and integrate it into the Short-horizon MPC framework, achieving faster computation without sacrificing control performance. As the model complexity increases, the training data often fails to cover the data distribution encountered in real-world deployment. Traditional behavioral cloning methods may exhibit degraded performance due to distribution shifts, causing the learned model to deviate from the optimal value function. 

\vspace{-2mm}
\begin{algorithm}[!ht]
\setlength{\itemsep}{1pt} 
\caption{VF-DAGGER algorithm}
\label{algo:ratio}
\begin{algorithmic}[]
\STATE \textbf{Require:} expert MPC policy $\pi^*$, iterations number $n_{D}$ 
\STATE \textbf{1)} Generate dataset $\mathcal{D}$ by policy $\pi^*$
\STATE \textbf{2)} Train the value function $V^*$ on dataset $\mathcal{D}$ using (15)
\STATE \textbf{3)} Let $V_1=V^*$
\STATE \textbf{4)} \textbf{for} $i = 1$ \textbf{to} $n_{D}$ \textbf{do}
    \STATE \textbf{5)} \hspace{1em} \parbox[t]{0.8\linewidth} {Obtain $\pi_i$ by embedding $V_i$ into the value function of (13)}\vspace{1pt}
    \STATE \textbf{6)} \hspace{1em}  Sample T-step trajectories using $\pi_i$
    \STATE \textbf{7)} \hspace{1em} \parbox[t]{0.8\linewidth} {Generate dataset $\mathcal{D}_i$ of visited states by $\pi_i$ and actions given by expert}\vspace{1pt}
    \STATE \textbf{8)} \hspace{1em}  Aggregate dataset: $\mathcal{D} \leftarrow {\mathcal{D}} \cup {\mathcal{D}_i}$
    \STATE \textbf{9)} \hspace{1em} \parbox[t]{0.8\linewidth} {Train the value function $\hat V_{i+1}$ on dataset $\mathcal{D}$ using (15)}\vspace{1pt}
    \STATE \textbf{10)} \hspace{1em} Let $V_{i+1}=\beta_i V^*+(1-\beta_i)\hat V_{i+1}$
\STATE \textbf{11)} \textbf{end for}
\STATE \textbf{12)} Return best $V_{\text{NN}}=V_i$ on validation
\end{algorithmic}
\end{algorithm}

We propose Value Function Dataset Aggregation (VF-DAGGER), which inherits the training philosophy of DAGGER \cite{a3}. It mitigates distribution shift by iteratively merging initial demonstrations with actual policy state distributions into hybrid datasets (Algorithm \ref{algo:ratio}). At each iteration, datasets generated by the CBF-MPC policy with embedded current neural value functions are aggregated into the original dataset. The value function is then learned on the augmented dataset and iteratively integrated with the initial neural value function, ensuring that the learned neural value function remains close to the value function of the original CBF-MPC problem (7). Specifically, the original training dataset is collected by offline solving the CBF-MPC (expert MPC) problem (7) with long prediction horizons under appropriate sampling strategies. The resulting optimal state-action-value dataset, denoted as 
$\mathcal{D}:=\{(x_{i},u_{i}^{*},V_{\text{MPC}}(x_{i}))\}_{i=1}^{n_{\mathrm{tr}}}$, where $n_{\mathrm{tr}}$ is the number of training samples, is utilized to learn the initial neural value function $V^*$, where the mean squared error (MSE) is adopted as the loss function:
\begin{align}
    \mathcal{L}_{\text{MSE}}(\mathcal{D}, \phi)
=\frac{1}{n_{tr}} \sum_{i=1}^{n_{tr}} |V(x_i; \phi) - V_{MPC}(x_i)|^2
\end{align}
where $\phi$ denotes the parameters learned by the neural network. During each iteration, the current neural value function $V_i$ is embedded into the Short-horizon MPC to formulate the BAN-MPC policy $\pi_i$. The policy $\pi_i$ is used to sample T-step trajectories to generate state sequences, while the expert policy $\pi^*$ provides the corresponding actions, forming a new dataset $\mathcal{D}_i$. This dataset is then aggregated with the original dataset to form an augmented dataset $\mathcal{D}$. The neural value function $\hat V_{i+1}$ is subsequently retrained on this augmented dataset $\mathcal{D}$ while employing the same loss function as specified in equation (15). To better approximate the value function of expert MPC, we propose modifying the neural value function $V_{i+1}$:
\begin{align}
    V_{i+1}=\beta_i V^*+(1-\beta_i)\hat V_{i+1}
\end{align}
where $\beta_i$ denotes an optional weighting coefficient. This design is often recommended in practice, as the learned value function may deviate from the expert MPC’s during initial iterations due to insufficient training data. After sufficient iterations, the best-performing model on the validation set is selected as $V_{\text{NN}}$, approximating $V_{\text{MPC}}$ of CBF-MPC.

\subsection{Learning Neural Sensitivity for Parameter-Adaptive Neural Value Function}
When model parameters change, the original neural network-based strategy may fail, requiring retraining and incurring significant offline computational costs. A direct approach that concatenates model parameters $\theta \in \mathbb{R}^{q}$ with system states $x \in \mathbb{R}^{n_x}$ as network inputs leads to a combinatorial explosion in the required training data. With $K$ samples required per input dimension, the training set grows as $\mathcal{O}(K^{n_x + q})$.

In contrast, we leverage the parameter sensitivity of the MPC problem to predict how the optimal value function responds to parameter changes online, avoiding the need for retraining. We select the nominal dynamics parameters $\theta_{\text{nom}}$ and learn the neural value function following the similar learning methods described in Section III-A. The value function of the CBF-MPC as expressed in (7) is denoted as $V_{\text{MPC}}(x,\theta_{\text{nom}})$, while the learned neural value function is represented as $V_{\text{NN}}(x)$. Additionally, we use a second neural network to learn the sensitivity of the value function with respect to parameters $(x_i,\left.\frac{\partial}{\partial\theta}V_{\text{MPC}}(x_i,\theta)\right|_{\theta_{\text{nom}}})$, with the corresponding training dataset denoted as $\hat{\mathcal{D}}\colon=\{(x_{i},\frac{\partial}{\partial\theta}V_{\text{MPC}}(x_{i},\theta))\}_{i=1}^{n_{\mathrm{tr}}}$. We apply the DAGGER method to learn the neural sensitivity, represented by $\nabla_{V_{\text{NN}}}(x):\mathbb{R}^n\to\mathbb{R}^{m\times p}$, which outputs an approximation of the partial derivative (Jacobian) $\frac{\partial}{\partial\theta}V_{\text{MPC}}$ evaluated at $\theta_{\text{nom}}$. Using the two neural networks, $V_{\text{NN}}(x)$ and $\nabla_{V_{\text{NN}}}(x)$, we can provide a parameter-adaptive neural value function:
\begin{align}
    V_{\text{BAN-MPC}}(x,\theta)=V_{\text{NN}}(x)+\nabla_{V_{\text{NN}}}(x)(\theta-\theta_{\text{nom}})
\end{align}

In light of (12) and its validity conditions, the real system parameters $\theta$ must stay close to the nominal values $\theta_{\text{nom}}$, at least such that the active constraint set remains unchanged as stipulated in Assumption 2. The parameter-adaptive neural value function uses the nominal parameters to approximate the optimal value function, and the prediction factor is given by the approximate sensitivity $\nabla_{V_{\text{NN}}}(x)$. When $\theta$ varies, the approximate optimal value function can be efficiently computed by evaluating $V_{\text{NN}}$ and $\nabla_{V_{\text{NN}}}(x)$ once at $\theta_{\text{nom}}$, followed by a simple vector multiplication and addition. This avoids a full forward pass with $(x,\theta)$ as input and significantly reduces both the offline dataset dimensionality and the online computational burden. The Probabilistic Practical Exponential Stability of BAN-MPC is established, with detailed proofs and conclusions provided in the Appendix.

\section{HIL EXPERIMENTS}

In this section, we conduct Hardware-in-the-Loop (HIL) experiments on both a unicycle and a quadrotor to validate the efficacy of the proposed control framework for safe control on embedded devices, as shown in Fig. \ref{fg:Figure_2}.

\begin{figure}[!ht]
\centering
    \includegraphics[width=0.6\linewidth]{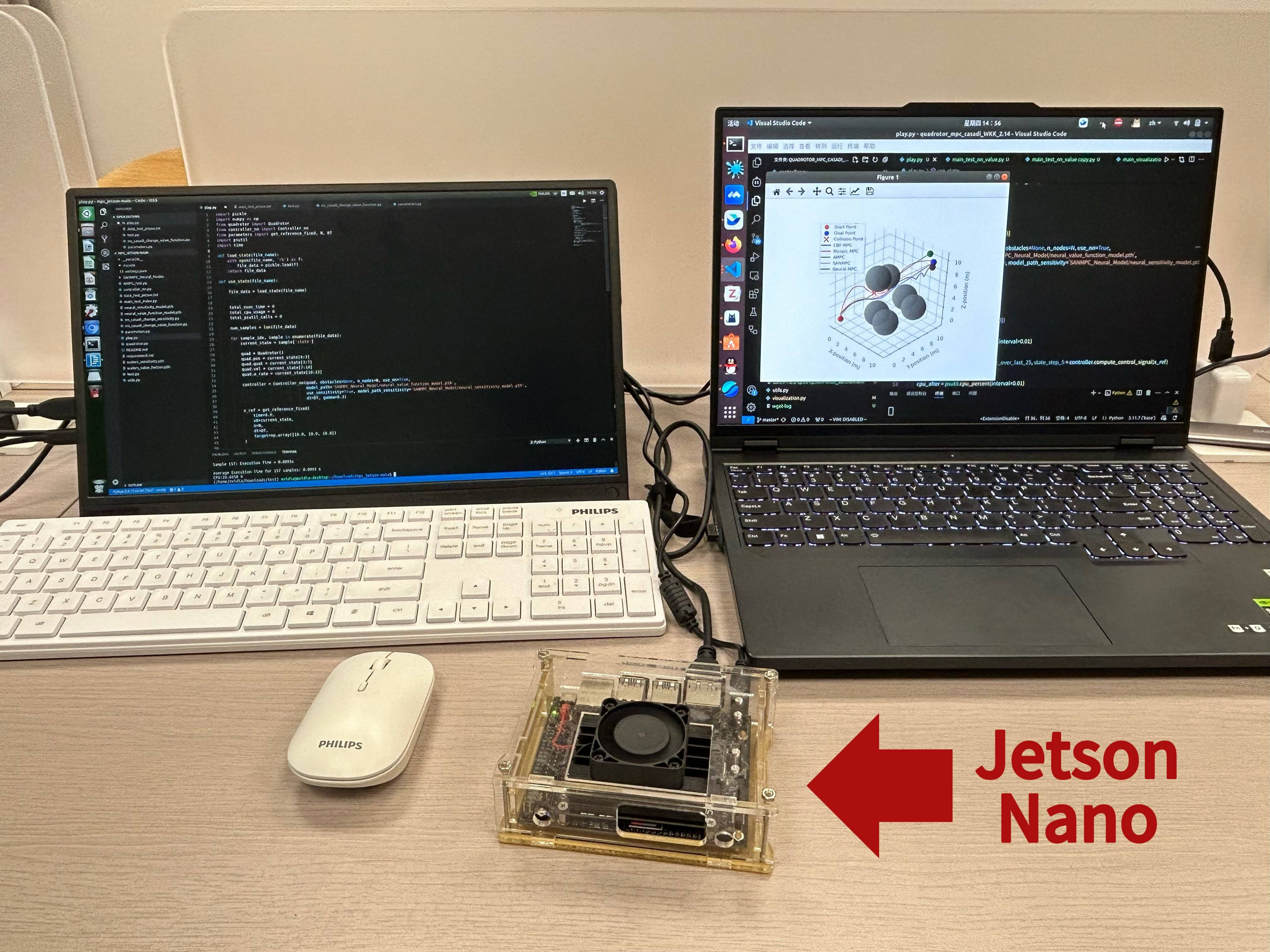}

    \caption{HIL evaluation system based on Jetson Nano.} 
    \label{fg:Figure_2}
\vspace{-5mm}
\end{figure}

\subsection{Baseline Controllers}
To evaluate the quality and computational improvements of the proposed method, we compared the following controller designs: 
\begin{itemize}
\item \textbf{CBF-MPC} is a controller based on OCP (7), where we use the quadratic cost function:

$L\left(\{x_{k}\}_{0}^{N},\{u_{k}\}_{0}^{N-1}\right)=\sum_{k=0}^{N}\|x_{k}\|_{Q}^{2}+\sum_{k=0}^{N-1}\|u_{k}\|_{R}^{2}$.

with a prediction horizon of $N=30$. The CBF adopts the standard form of (7f), where $H(x)$ represents the distance from the controlled object to the obstacle boundary, and $\gamma =0.3$.

\item \textbf{Short-horizon MPC} shortens the prediction horizon of CBF-MPC to $N=3$.

\item \textbf{AMPC} employs a neural network to approximate the CBF-MPC problem (7), similar to the approach in \cite{c19}. The MPC mapping is approximated from the current sampling point $x(t_{i})$ to the first optimal input $u_0^*(t_i)$.

\item \textbf{Neural MPC} integrates the neural value function learned in Section III-A into the optimization objective of Short-horizon MPC.

\item \textbf{BAN-MPC} extends Neural MPC with neural sensitivity for adaptive control. When model parameters are fixed, BAN-MPC aligns with Neural MPC.
\end{itemize}

\subsection{Simulation Setting}
Jetson Nano leverages its parallel processing capabilities to significantly accelerate the evaluation of neural networks. In Fig. \ref{fg:Figure_3}, we introduce a HIL evaluation system to facilitate the implementation of BAN-MPC and comparative experiments on the Jetson Nano, which comprises a host personal computer (PC) and a Jetson Nano embedded device. The host PC converts offline-trained PyTorch neural network models (i.e., neural value function and neural sensitivity) into embeddable CasADi \cite{a6} symbolic expressions for value function integration, subsequently deploying neural network models, dynamics models, and test data to the Jetson platform. Preconfigured with Ubuntu 18.04, Torch 1.10, and CasADi 3.5, the Jetson Nano executes real-time BAN-MPC computations, with test results returned to the PC for data processing and visualization. Secure SSH-based remote communication and SFTP-enabled Xftp tools ensure reliable bidirectional data transfer between both systems.

\begin{figure}[ht]
\centering
    \includegraphics[width=1\linewidth]{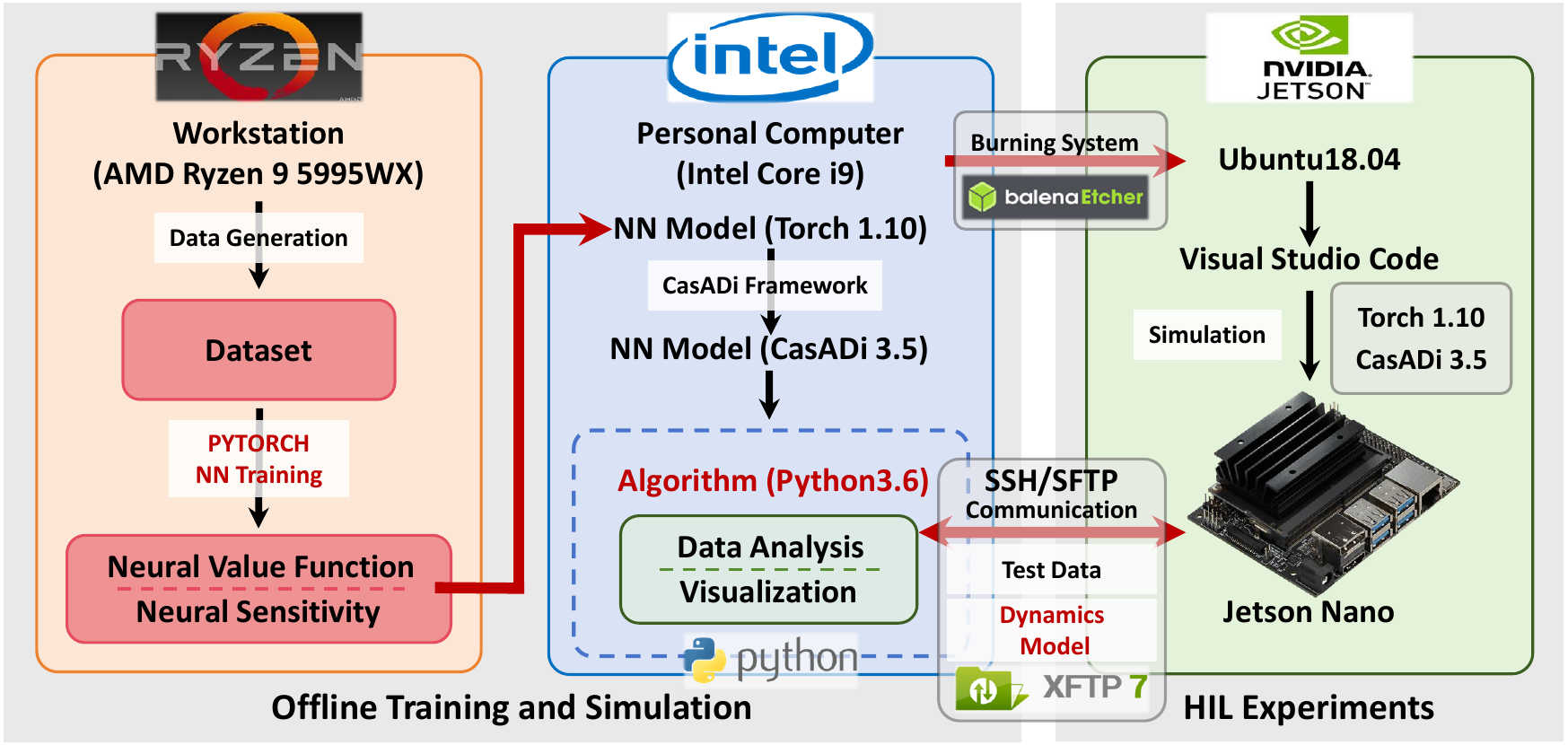}
    \vspace{-5mm}
    \caption{Hardware-in-the-Loop Evaluation System Diagram.} 
    \vspace{-2mm}
    \label{fg:Figure_3}
\end{figure}
In our experiments, all learning-based controllers are implemented using neural networks with three hidden layers, each comprising 32 neurons and utilizing the tanh activation function. IPOPT is used as the nonlinear solver for solving the optimization problems involved in these controllers. For data generation, we generated $5 \times 10^{5}$ samples offline for the unicycle model and $10^{7}$ samples offline for the quadrotor model. The data generation process was carried out on a workstation with 128 cores and an AMD Ryzen 9 5995WX.

\subsection{Unicycle for Obstacle Avoidance and Navigation}

In the first example, we consider the unicycle system \cite{a5}:
\begin{align}
\begin{bmatrix}
\dot{x} \\
\dot{y} \\
\dot{\psi}
\end{bmatrix}=A(\psi)u,A(\psi)=
\begin{bmatrix}
\cos(\psi) & 0 \\
\sin(\psi) & 0 \\
0 & 1
\end{bmatrix},u=
\begin{bmatrix}
v \\
\omega
\end{bmatrix}
\end{align}
where $(x,y)$ represents the position, $\psi$ denotes the heading angle, $v$ is the linear velocity, and $\omega$ is the angular velocity. The input constraints are $\omega\in[-1.8,1.8]$ and $v\in[-0.26,0.26]$. The task is to navigate a unicycle with a radius of 0.1 from the start to the goal while avoiding five circular obstacles of varying sizes. A distance of less than 0.03 from the obstacle's edge is considered unsafe. 

\begin{figure}[h!]
\centering
    \includegraphics[width=0.7\linewidth]{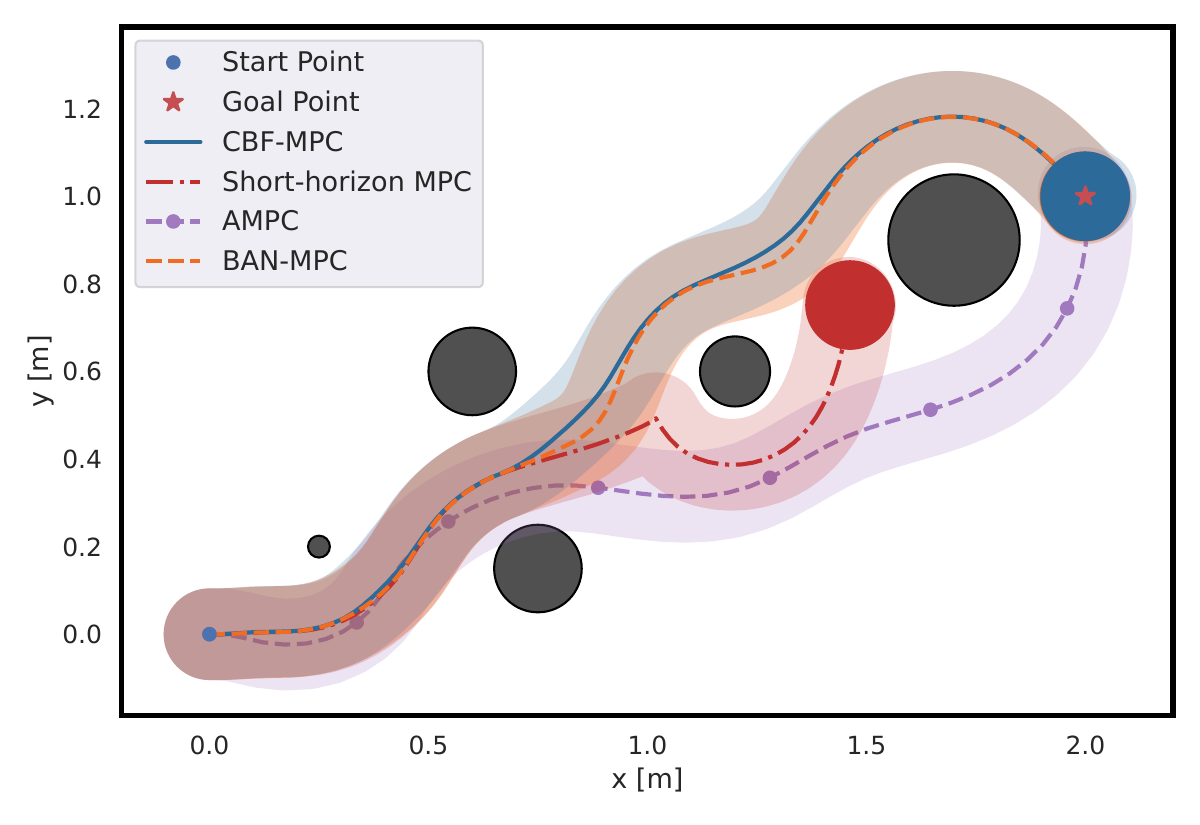}
\vspace{-2mm}
    \caption{Obstacle avoidance navigation task for the unicycle. The gray circles represent obstacles, the blue dot marks the starting point, and the red star denotes the target point.} 
    \vspace{-2mm}
    \label{fg:Figure_4}
\end{figure}

\begin{figure}[!h]
\centering
    \includegraphics[width=0.7\linewidth]
    {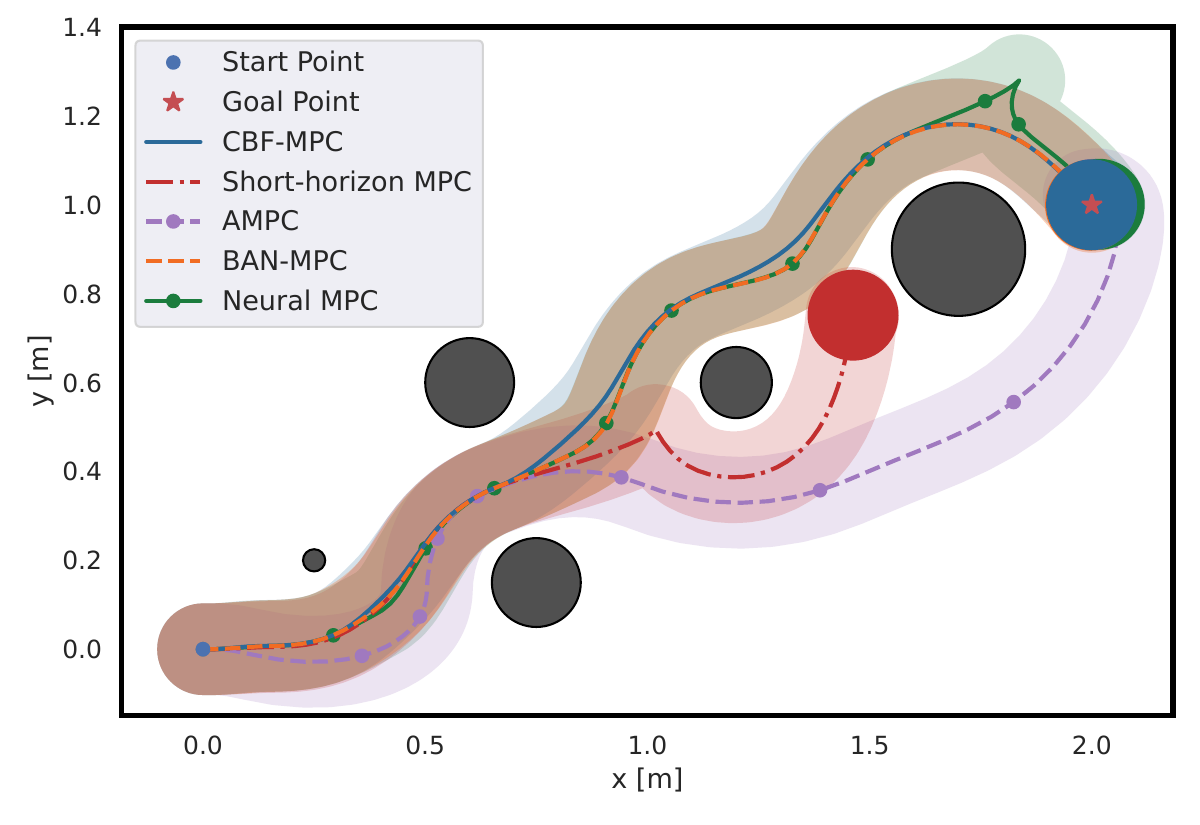}
\vspace{-2mm}
    \caption{Obstacle avoidance navigation task for the unicycle with a changed system model. The start point, goal point, and obstacles remain unchanged.} 
    \label{fg:Figure_5}
    \vspace{-2mm}
\end{figure}

\begin{table}[!ht]
\centering
\caption{CONTROL METRICS FOR UNICYCLE}
\label{table1}
\begin{tabular}{cccc}
\toprule
Baseline & Domain & Average & Computation \\ controllers &  safety(\%) & suboptimality(\%) & time (s) \\
\midrule
CBF-MPC & 100 & 0 & 0.4294\\
AMPC & 92.40 & 2.14 & 0.0080\\
BAN-MPC & 100 & 0.26 & 0.0627\\
\bottomrule
\end{tabular}
\vspace{-2mm}
\end{table}

Fig. \ref{fg:Figure_4} compares the performance of several controllers tested on the Jetson Nano (BAN-MPC is equivalent to Neural MPC when parameters are unchanged). Short-horizon MPC fails due to short-sightedness, while AMPC lacks strict safety guarantees and relies on neural approximations, leading to unsafe behavior and trajectory deviations. In contrast, BAN-MPC exhibits superior control performance, with state trajectories nearly identical to those of CBF-MPC. Furthermore, Table \ref{table1} provides detailed comparison metrics. Domain safety evaluates 10,000 test points, quantifying the proportion of states within the safe set. Average suboptimality reflects the average relative cost difference from CBF-MPC, and computation time is the average time to compute control inputs per instance. Notably, as Short-horizon MPC failed to reach the target state, no data is recorded. BAN-MPC exhibits superior control performance, safety, and computational efficiency over traditional MPC for unicycle obstacle avoidance.

To evaluate the adaptability of BAN-MPC, we modify matrix $A$ to simulate changes in the system model, such as variations in friction or mass. Fig. \ref{fg:Figure_5} shows that Neural MPC suffers significant deviations during sharp turns due to error accumulation, while BAN-MPC adaptively adjusts inputs using neural sensitivity, keeping the state closely aligned with CBF-MPC and demonstrating excellent adaptability.

\subsection{Quadrotor for Obstacle Avoidance and Navigation}

\begin{figure*}[ht]
\centering
    \includegraphics[width=0.85\linewidth]{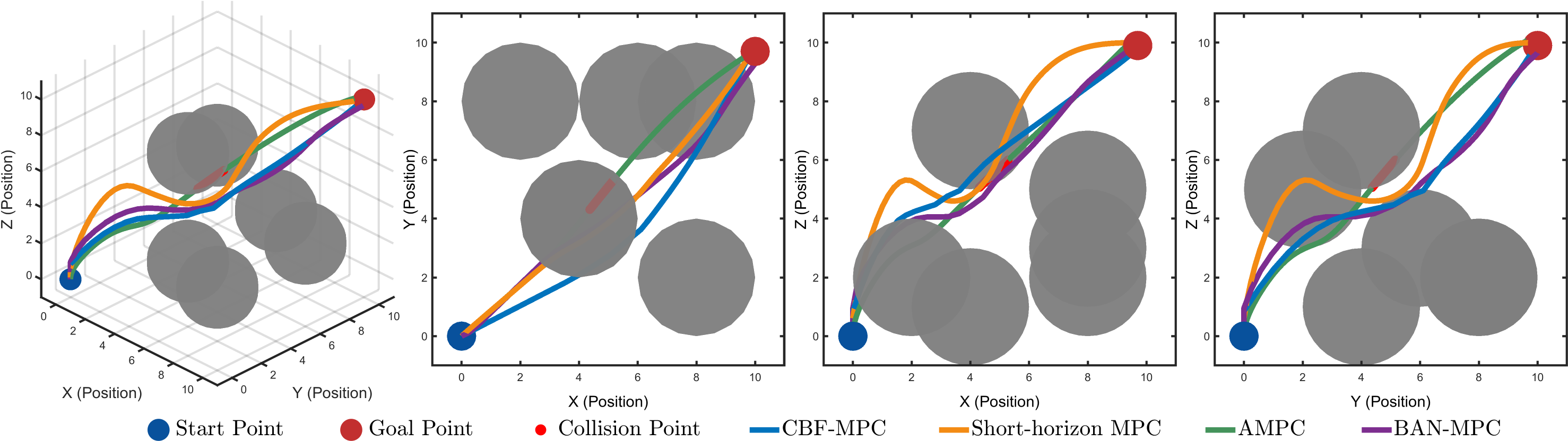}
    \caption{Quadcopter obstacle avoidance navigation trajectory and projection plot.} 
    \label{fg:Figure_6}
\end{figure*}

\begin{figure*}[ht]
\centering
    \includegraphics[width=0.85\linewidth]{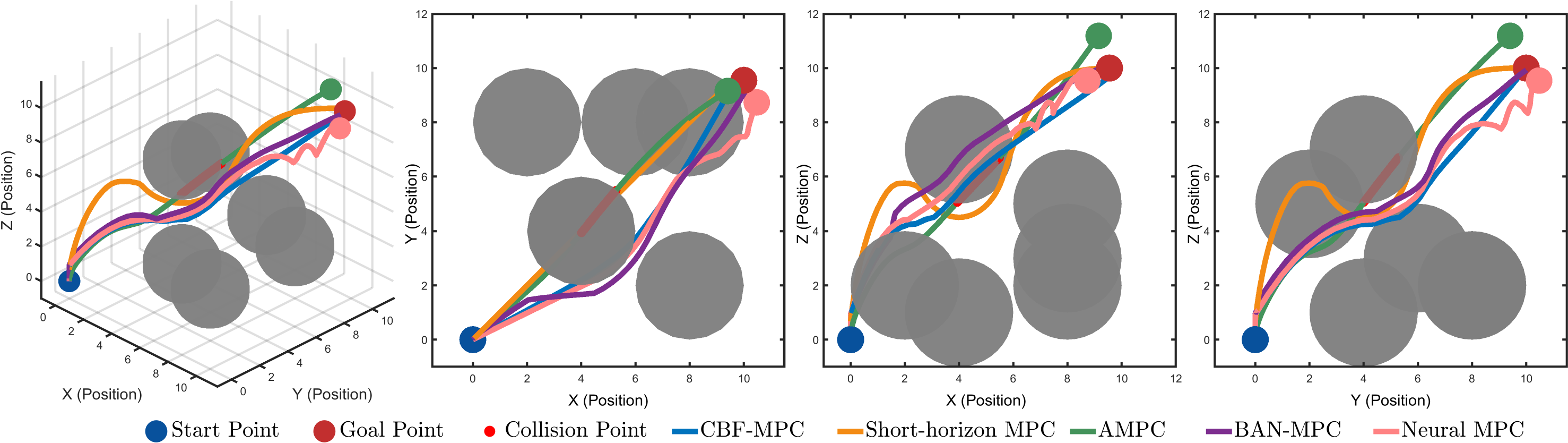}
    \caption{Quadcopter obstacle avoidance navigation trajectory and projection plot after modifying mass parameters.} 
    \label{fg:Figure_7}
  
\end{figure*}

\begin{table*}[ht]
\centering
\caption{COMPARISON OF DIFFERENT CONTROLLERS ACROSS THE PERSONAL COMPUTER AND JETSON NANO}
\label{table2}
\setlength{\tabcolsep}{0.4cm} 
\begin{tabular}{c|cccc|ccc}
\toprule
Platform & \multicolumn{4}{c|}{Intel Core i9@2.2GHz} &\multicolumn{3}{c}{Jetson Nano@1.42GHz} \\
\midrule
Baseline controlers & CBF-MPC & Short-horizon MPC & AMPC & BAN-MPC & CBF-MPC &  AMPC & BAN-MPC \\
\midrule
Domain safety(\%) & 100 & 100 & 97.13 & 100 & 100 & 96.98 & 100 \\
\midrule
Boundary safety(\%) & 100 & 100 & 49.55 & 100 & 100 & 50.05 & 100 \\
\midrule
Computation time(s) & 1.6029 & 0.0525 & 0.0010 & 0.0804 & 2.2008 & 0.0008 & 0.0096 \\
\midrule
Utilization(\%) & 0.56 & 0.98 & 5.57 & 0.69 & 2.24 & 13.24 & 25.95 \\
\bottomrule
\end{tabular}
\vspace{-3mm}
\end{table*}

Next, we adopt the 6-degree-of-freedom rigid body quadrotor model from \cite{a8}. The diagonal moment of inertia matrix is given by $J=diag(J_x,J_y,J_z)$, and the control input $u$ consists of the rotor thrusts $T_i, \forall i \in \{0, 1, 2, 3\}$. The position of the quadrotor in the world coordinate system is denoted as $p_{WB}$, its attitude is represented by $q_{WB}$, its linear velocity is given by $v_{WB}$, and the angular velocity is expressed as $\omega_B$. The thrusts on each rotor are modeled separately, resulting in the total thrust $T_B$ and the body torque $\tau_B$:
\begin{align}
T_B = \begin{bmatrix} 
0 \\
0 \\
\sum T_i 
\end{bmatrix}
,
\tau_B = \begin{bmatrix}
d_y (-T_0 - T_1 + T_2 + T_3) \\
d_x (-T_0 + T_1 - T_2 + T_3) \\
c_\tau (-T_0 + T_1 - T_2 - T_3)
\end{bmatrix}
\end{align}
where, $d_x$ and $d_y$ represent the displacements from the rotors to the quadrotor's center of mass, and $c_\tau$ is the rotor drag moment constant. The 13-dimensional nonlinear state-space model of the system is described by the following equation:
\begin{align}
\dot{x} = f(x, u) =
\begin{bmatrix}
v_W \\
q_{WB} \cdot \left[ \begin{matrix} 0 \\ \omega_B / 2 \end{matrix} \right] \\
q_{WB} \odot T_B + g_W \\
J^{-1} (\tau_B - \omega_B \times J \omega_B)
\end{bmatrix}
\end{align}
where $x = \left[ p_{WB}, q_{WB}, v_{WB}, \omega_B \right]^T$ is the state vector and $g_W = \left[ 0, 0, -9.81 \, \mathrm{ms}^{-2} \right]^T$ denotes the Earth’s gravity. To incorporate these dynamics into discrete-time algorithms, we utilize the 4th-order explicit Runge-Kutta method, denoted as $f_{RK4}(x,u)$, to integrate $\dot{x}$ with respect to the initial state $x_k$, input $u_k$, and integration step $\delta t$, resulting in the discrete dynamics:
\begin{align}
x_{k+1} = f_{RK4}\bigl(x_k, u_k, \delta t\bigr)
\end{align}

The task is to navigate the quadrotor, modeled as a 0.5m radius sphere, from start to goal while avoiding six spherical obstacles. Fig. \ref{fg:Figure_6} illustrates the trajectory profiles of quadrotor navigation under different control strategies, with corresponding two-dimensional projections onto the xy-, xz-, and yz-planes (BAN-MPC is equivalent to Neural MPC when parameters are unchanged). Short-horizon MPC, due to its short-sighted control, deviates significantly from CBF-MPC's trajectory and performs poorly. Meanwhile, AMPC fails to ensure safety at unobserved points, leading to a collision. In contrast, BAN-MPC closely follows the baseline CBF-MPC trajectory, demonstrating superior performance.

To evaluate the adaptability of BAN-MPC, we modified the quadrotor's mass parameter from 1kg to 1.2kg. The results are shown in Fig. \ref{fg:Figure_7}. When using Neural MPC as the controller, the quadrotor trajectory exhibits noticeable deviations and fluctuations due to accumulated errors. In contrast, BAN-MPC, with its parameter adaptation capability, maintains performance close to the baseline CBF-MPC.

We provide detailed metrics in Table \ref{table2} (Short-horizon MPC is computationally infeasible on the Jetson Nano due to numerical instability under complex constraints, causing solve times to exceed hardware limits, so no results are recorded; While feasible on the PC, it failed to converge within real-time constraints on the embedded platform in $>90\%$ of trials). Domain safety reflects the overall safety of the quadrotor in the state space, quantified by randomly sampling 10,000 points and calculating the proportion remaining within the safe set. Boundary safety reflects obstacle avoidance near the safety boundary, evaluated by sampling 10,000 points within 0.5m of the obstacles and calculating the proportion within the safe set. Domain safety focuses on overall safety, while boundary safety addresses safety near critical boundaries or specific boundaries. Computation time refers to the average time taken to solve the control inputs in each instance, while Utilization indicates the average CPU usage for computing the control inputs. Our algorithm maintains excellent safety performance on both the personal computer and the embedded platform. Notably, on the Jetson Nano@1.42GHz, the average computation time was as low as $9.6 \times 10^{-3}$s, highlighting the advantages of parallel computation and CUDA acceleration for neural networks. The test results also showed that our method occupies no more than 26\% of the hardware resources on the Jetson Nano, with parallel computing effectively allocating resources to maximize performance, sufficiently meeting the computational demands of the quadrotor flight control system, demonstrating that BAN-MPC can achieve safe and fast control on resource-constrained embedded platforms.

\subsection{Parameter-Adaptive Validation}

As shown in Fig. \ref{fg:Figure_8}, we apply different deviations $\Delta \theta$ to the model parameters (unicycle mass, unicycle tire friction, quadrotor mass, and quadrotor arm length) to validate the parameter adaptation capability of BAN-MPC. Results indicate that, with parameter deviations $\le15\%$ from nominal, safety is preserved and control error remains below $5\%$, well within engineering limits.

\begin{figure}[ht]
\centering
    \includegraphics[width=0.72\linewidth]{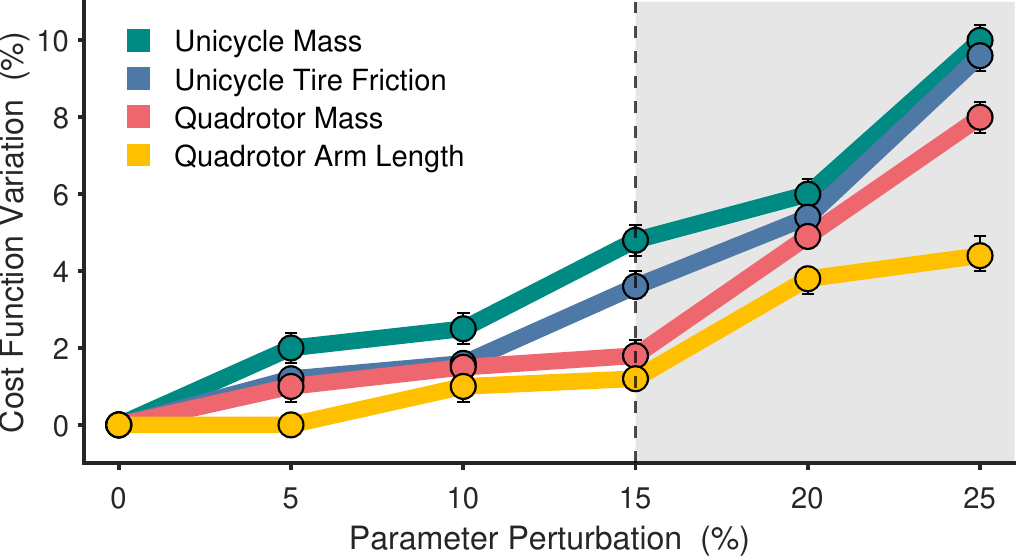}
    \vspace{-2mm}
    \caption{BAN-MPC's parameter adaptive performance validation} 
    \vspace{-2mm}
    \label{fg:Figure_8}
  
\end{figure}

\section{CONCLUSIONS}

We propose a novel MPC framework that employs Control Barrier Functions for obstacle avoidance to ensure safety in complex environments, where embedding a neural network-approximated value function into Short-horizon MPC objectives enables rapid online computation while neural sensitivities facilitate adaptive value function adjustments without retraining during parameter variations. Hardware-in-the-loop experiments on Jetson Nano demonstrate BAN-MPC's superiority over traditional MPC in control precision, computation speed, safety guarantees, and parameter adaptability --- though with inherent limitations: significant approximation errors emerge under large parametric deviations ($>15\%$) from nominal values, and while our current validation covers unicycle and quadrotor systems with observable parameter changes, real-world applications often involve unobservable parameters requiring estimation and high-dimensional systems like humanoids remain unexplored. To address these challenges, future work will develop robust adaptation strategies for extreme parametric shifts, integrate real-time estimation modules (e.g., neural-enhanced Kalman filtering) for unobservable dynamics, and extend deployment to safe control in high-dimensional robotic systems.





\bibliographystyle{IEEEtran}
\bibliography{root}

\appendix
\subsection{Performance Guarantees of the VF-DAGGER Algorithm}
Traditional behavioral cloning trains policies by minimizing a surrogate loss under the expert's state distribution: $\hat{\pi}_{\text{sup}} = \arg \min_{\pi \in \Pi} \mathbb{E}_{s \sim d_{\pi^*}} \left[ \ell(s, \pi) \right]$, where $\Pi$ is the class of policies the learner is considering and $\ell$ is the observed surrogate loss function. This approach fundamentally ignores the state distribution shift during policy deployment. As proven by Ross and Bagnell \cite{k9} : for behavioral cloning policy $\pi$ achieving expected loss $\epsilon$ on expert states $d_{\pi^*}$, its $T$-step cumulative cost satisfies.

\newtheorem{theorem}{Theorem}
\begin{theorem}\label{thm:1}[The performance bound of behavior cloning \cite{k9}]
Let $\mathbb{E}_{s \sim d_{\pi^*}}\bigl[\ell(s,\pi)\bigr]=\epsilon$, then $J(\pi)\le J(\pi^*)+T^{2}\epsilon$.
\end{theorem}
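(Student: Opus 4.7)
The plan is to prove this via a coupling argument that tracks when the learner first deviates from the expert. I would first fix the standard convention in the behavior-cloning literature: the surrogate loss $\ell(s,\pi)$ upper-bounds the 0/1 disagreement indicator $\mathbf{1}[\pi(s)\neq\pi^*(s)]$, and the per-step cost is bounded in $[0,1]$ so that the horizon-$T$ return lies in $[0,T]$. Writing $d_{\pi^*,t}$ for the expert's state distribution at step $t$ and $d_{\pi^*}=\frac{1}{T}\sum_{t=1}^{T}d_{\pi^*,t}$, the hypothesis becomes $\frac{1}{T}\sum_{t=1}^{T}\mathbb{E}_{s\sim d_{\pi^*,t}}[\ell(s,\pi)]=\epsilon$.

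Next I would couple the two rollouts from the same initial state and shared randomness, and let $\tau$ be the first time step at which $\pi$ picks a different action than $\pi^*$. The key observation is that conditioned on $\{\tau>t\}$, the state distribution of the $\pi$-rollout at step $t$ is exactly $d_{\pi^*,t}$, because up to $\tau-1$ the two trajectories are identical. This yields the per-step bound
\begin{equation*}
\Pr[\tau=t \mid \tau>t-1]\;\le\;\mathbb{E}_{s\sim d_{\pi^*,t}}[\ell(s,\pi)],
\end{equation*}
and a union bound then gives $\Pr[\tau\le T]\le \sum_{t=1}^{T}\mathbb{E}_{s\sim d_{\pi^*,t}}[\ell(s,\pi)]=T\epsilon$.

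Finally, I would decompose the performance gap by conditioning on $\tau$. On the event $\{\tau>T\}$ the two rollouts coincide and contribute zero excess cost. On the event $\{\tau=t\}$ the trajectories may differ only on the tail of length $T-t+1\le T$, and since each stage cost is in $[0,1]$ the excess cost is at most $T$. Combining,
\begin{equation*}
J(\pi)-J(\pi^*)\;\le\;\sum_{t=1}^{T}\Pr[\tau=t]\cdot T\;\le\;T\cdot\Pr[\tau\le T]\;\le\;T^{2}\epsilon,
\end{equation*}
which is the claim. The result $J(\pi)\le J(\pi^*)+T^2\epsilon$ follows immediately.

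The main obstacle is the distribution-shift mismatch: the hypothesis controls the loss under the \emph{expert's} state distribution $d_{\pi^*}$, whereas $J(\pi)$ is governed by the \emph{learner's} state distribution $d_{\pi}$. The coupling is precisely the device that bridges this gap, because the two distributions are identical prior to the first mistake; the looseness introduced when the mistake snowballs over the remaining horizon is exactly what produces the quadratic $T^{2}$ factor instead of the linear $T\epsilon$ one would obtain under a no-distribution-shift assumption. I would also be careful to state explicitly that $\ell$ upper-bounds the 0/1 disagreement and that costs are bounded, since without these normalizations the bound is not meaningful; everything else is a short computation.
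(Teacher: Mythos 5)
The paper does not actually prove this statement: Theorem~1 is imported verbatim from Ross and Bagnell \cite{k9} and used purely as a citation, so there is no in-paper proof to compare against. Your argument is a correct reconstruction of the standard reduction proof from that reference: couple the learner and expert rollouts, bound the probability of a first disagreement by $T\epsilon$ using the fact that the two trajectories coincide before the first mistake, and charge at most $T$ excess cost (per-step costs in $[0,1]$) once a mistake occurs, giving $T\cdot T\epsilon$. The conventions you state up front (that $\ell$ upper-bounds the 0/1 disagreement and that costs are normalized) are exactly the ones under which the bound is meaningful, and the paper leaves them implicit.

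One step is stated imprecisely. Conditioned on $\{\tau>t-1\}$ the learner's state at step $t$ equals the expert's state at step $t$, but its conditional law is the expert's distribution \emph{conditioned on no earlier disagreement}, not the unconditional $d_{\pi^*,t}$; consequently the displayed inequality $\Pr[\tau=t\mid\tau>t-1]\le\mathbb{E}_{s\sim d_{\pi^*,t}}[\ell(s,\pi)]$ need not hold as written, since disagreement at time $t$ could be positively correlated with survival up to time $t$. What you actually need, and what does hold, is the unconditional bound $\Pr[\tau=t]\le\mathbb{E}_{s\sim d_{\pi^*,t}}[\ell(s,\pi)]$, which follows from the event inclusion $\{\tau=t\}\subseteq\{\pi\ \text{disagrees with}\ \pi^*\ \text{at the expert's state at time}\ t\}$ under the coupling, together with $\ell$ dominating the disagreement indicator. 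With that one-line repair the union bound and the final accounting go through unchanged, and the proof is complete.
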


Theorem 1 reveals that behavioral cloning exhibits quadratically growing performance error with task horizon $T$, inevitably failing in long-horizon tasks despite excellent performance on expert demonstrations.

When behavioral cloning is used to approximate the CBF-MPC value function $V^*$, the learned network $V'$ induces a policy $\pi'$ through the CBF-MPC controller. Since $\pi'$ is entirely generated by $V'$ through MPC optimization, it satisfies the conditions of Theorem 1. Relative to the expert policy (CBF-MPC) $\pi^*$, we have the following guarantee.

\begin{lemma}\label{lem:1}
Let $\mathbb{E}_{s \sim d_{\pi^*}}\bigl[\ell(s,\pi')\bigr]=\epsilon$, then $J(\pi')\le J(\pi^*)+T^{2}\epsilon$.
\end{lemma}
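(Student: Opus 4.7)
The plan is to recognize Lemma~\ref{lem:1} as a direct specialization of Theorem~\ref{thm:1} to the particular policy class used in our framework, so the work reduces to verifying that the hypotheses of Theorem~\ref{thm:1} are satisfied by $\pi'$. First, I would observe that $\pi'$ is a deterministic state-feedback mapping $s \mapsto u_0^*(s;V')$ obtained by solving the CBF-MPC problem (7) with the learned terminal cost $V'$ replacing the true value function. Since Theorem~\ref{thm:1} is stated for an arbitrary policy $\pi \in \Pi$ in the learner's class $\Pi$, and $\pi'$ is one such policy, it lies within the scope of the theorem.

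Next, I would interpret the hypothesis $\mathbb{E}_{s \sim d_{\pi^*}}[\ell(s,\pi')] = \epsilon$ in precisely the form required by Theorem~\ref{thm:1}: the observed surrogate loss of $\pi'$, averaged over the expert occupancy measure $d_{\pi^*}$, is $\epsilon$. With the hypotheses matched, Theorem~\ref{thm:1} applies verbatim and yields the desired conclusion $J(\pi') \le J(\pi^*) + T^2 \epsilon$. No additional machinery beyond the invocation is needed, because the $T^2$ factor in Theorem~\ref{thm:1} already accounts for the compounding of one-step deviations under the distribution shift from $d_{\pi^*}$ to $d_{\pi'}$, which is the very phenomenon that arises when the MPC rolls out under the learned $V'$ rather than the true $V^*$.

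The part that deserves a sentence of explicit justification — and the only step I would flag as nontrivial — is the argument that $\pi'$ remains a legitimate element of $\Pi$ despite being defined implicitly through an optimization problem that includes CBF safety constraints. I would address this by noting that the CBF constraints in (7f) are imposed identically for the expert $\pi^*$ and for $\pi'$, so both policies take values in the same feasible action set $\mathcal{U}_{\text{cbf}}$; hence the comparison underlying $J(\pi') - J(\pi^*)$ is well-posed and the surrogate loss $\ell(s,\pi')$ is defined pointwise by the gap between $\pi'(s)$ and $\pi^*(s)$. Aside from this clarification, the main obstacle — distribution shift over the horizon — has already been absorbed into Theorem~\ref{thm:1}, so Lemma~\ref{lem:1} follows as an immediate corollary rather than requiring an independent induction on $T$.
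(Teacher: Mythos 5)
Your proposal matches the paper's own justification: the paper likewise treats Lemma~\ref{lem:1} as an immediate corollary of Theorem~\ref{thm:1}, arguing only that $\pi'$ is a bona fide policy (being induced by $V'$ through the CBF-MPC optimization) and that the hypothesis $\mathbb{E}_{s \sim d_{\pi^*}}[\ell(s,\pi')]=\epsilon$ is exactly the premise of Theorem~\ref{thm:1}. Your additional remark that the CBF constraints confine both $\pi^*$ and $\pi'$ to the same feasible action set, so the surrogate loss is well-defined, is a reasonable elaboration of a point the paper leaves implicit, but the route is the same.
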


Lemma 1 establishes that the performance gap between $\pi'$ (the policy from value function imitation) 
and the optimal CBF-MPC policy $\pi^*$ grows quadratically with task horizon $T$. 
Consequently, $\pi'$ suffers significant performance degradation in long-horizon tasks. 
Moreover, since $\epsilon$ is estimated under the expert state distribution $d_{\pi^*}$, 
its value may be substantially underestimated during actual deployment due to distribution shift.

We now establish a performance guarantee for the policy learned by the VF-DAGGER algorithm. Let $V^*(s)$ denote the expert value function, $\hat V_i(s)$
the value function learned at iteration $i$, $V_i(s)$ the mixed value function, and $\pi_i$ the CBF-MPC policy induced by $V_i(s)$; the expert CBF-MPC policy is $\pi^*$. Define the average classification error $\epsilon_i=\mathbb{E}_{s \sim d_{\pi_i}}\bigl[\ell(s,\pi_i)\bigr]$, and let $Q_{t}^{\pi^*}(s,\pi)$ be the $t$-step cost when one action from $\pi$ is taken at state $s$ and $\pi^*$ is followed for the remaining $t-1$ steps.

Assume there exists a constant $u$ such that, for every state $s$ and action $a$, $Q^{\pi^*}_{T-t+1}(s,a)\;-\;Q^{\pi^*}_{T-t+1}(s,\pi^*)\le  u $, meaning that the cost increase caused by any single-step action error is uniformly bounded by $u$. Let ${\pi}_{1:n_{D}}$ denote the sequence of policies ${\pi}_{1},{\pi}_{2},...,{\pi}_{n_{D}}$, where ${n_{D}}$ denotes the number of iterations. Let $\epsilon^* = \min_{\pi\in\Pi}\frac{1}{{n_{D}}}\sum_{i=1}^{n_{D}} \mathbb{E}_{s\sim d_{\pi_i}}\bigl[\ell(s,\pi)\bigr]$ be the true loss of the best policy in hindsight, where $\ell$ is the observed surrogate loss function. Then the following holds in the infinite sample case (infinite number of sample trajectories at each iteration).

\begin{theorem}\label{thm:2}
After running VF-DAGGER for $n_{D} =\tilde {\mathcal{O}}(uT)$ iterations, there exists a policy ${\pi} \in{\pi}_{1:{n_{D}}}$ such that $J(\pi)\le J(\pi^*)+uT\epsilon^* + \mathcal{O}(1)$.
\end{theorem}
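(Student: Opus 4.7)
The plan is to mirror the classical DAGGER reduction of Ross--Gordon--Bagnell, specialized to the value-function setting. Two ingredients are needed: a no-regret guarantee for the iterative updates that produce $\pi_1,\dots,\pi_{n_D}$, and the performance-difference lemma that bounds the cumulative cost gap of any induced policy by its single-step imitation loss.

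First I would cast the VF-DAGGER update as an online learning procedure. At iteration $i$ the learner commits to $V_i$ (equivalently, to $\pi_i$), nature reveals the loss functional $\ell_i(\pi):=\mathbb{E}_{s\sim d_{\pi_i}}[\ell(s,\pi)]$, and lines 7--8 of Algorithm 1 append the dataset $\mathcal{D}_i$ of expert actions at states visited by $\pi_i$. Retraining on $\mathcal{D}=\bigcup_{j\le i}\mathcal{D}_j$ via the MSE loss (15) is exactly Follow-The-Leader over the cumulative losses $\sum_{j\le i}\ell_j$, for which the standard strongly-convex FTL analysis yields
\[
\tfrac{1}{n_D}\sum_{i=1}^{n_D}\ell_i(\pi_i)\;\le\;\epsilon^*+\gamma_{n_D},\qquad \gamma_{n_D}=\tilde{\mathcal{O}}(1/n_D).
\]
The mixing coefficient $\beta_i$ in (16) is taken to decay geometrically, e.g.\ $\beta_i=(1-\alpha)^{i-1}$, so that the deviation between the played policy $\pi_i$ and the pure FTL policy induced by $\hat V_i$ contributes only a constant term to the cumulative loss.

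Second I would invoke the performance-difference lemma. Under the hypothesis $Q_{T-t+1}^{\pi^*}(s,a)-Q_{T-t+1}^{\pi^*}(s,\pi^*)\le u$, telescoping along the $T$-step horizon yields $J(\pi_i)-J(\pi^*)\le uT\,\epsilon_i$ for every $i$. Averaging over iterations and combining with the no-regret bound gives
\[
\min_{i}\,J(\pi_i)\;\le\;J(\pi^*)+uT\bigl(\epsilon^*+\gamma_{n_D}\bigr).
\]
Choosing $n_D=\tilde{\mathcal{O}}(uT)$ makes $uT\gamma_{n_D}=\mathcal{O}(1)$, producing the announced bound for the best iterate $\pi\in\pi_{1:n_D}$ and explaining why the iteration count carries a logarithmic overhead rather than being simply $\mathcal{O}(uT)$.

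The hard step will be making the no-regret reduction rigorous in the value-function setting. In classical DAGGER the learner directly emits actions, whereas here each action is obtained by solving the short-horizon CBF-MPC (13) with terminal cost $V_i$, so small regression errors in the learned value could in principle be amplified through the constrained optimizer. I would close this gap by appealing to Assumption 1: LICQ, SOSC, and strict complementarity guarantee, via the implicit function theorem applied to the KKT system (11), Lipschitz dependence of the CBF-MPC optimizer on the terminal cost; consequently the MSE surrogate on the value function dominates the induced action-imitation loss up to a constant factor that can be absorbed into $\epsilon^*$, and the standard DAGGER regret analysis transfers to $V$-space without modification.
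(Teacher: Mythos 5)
Your proposal follows essentially the same route as the paper's proof: the telescoping/performance-difference argument giving $J(\pi_i)-J(\pi^*)\le uT\epsilon_i$, a Follow-The-Leader no-regret bound $\tfrac{1}{n_D}\sum_i\epsilon_i\le\epsilon^*+\tilde{\mathcal{O}}(1/n_D)$, selection of the best iterate, and the choice $n_D=\tilde{\mathcal{O}}(uT)$ to absorb the regret into $\mathcal{O}(1)$. You are in fact somewhat more careful than the paper on two points it leaves implicit --- the transfer from the MSE value-regression loss to the induced action-imitation loss $\ell(s,\pi_i)$ (via Lipschitz dependence of the CBF-MPC optimizer on the terminal cost under Assumption 1), and the effect of the $\beta_i$ mixing in (16) on the FTL regret --- both of which the paper's proof simply assumes away by positing that each $\pi_i$ is the exact FTL iterate on the surrogate loss.
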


\begin{proof}
Given the current iterate $\pi_i$, consider the policy $\pi^{(i)}_{1:t}$, which executes $\pi_i$ in the first $t$-steps and then execute the expert $\pi^*$. Hence $\pi^{(i)}_{1:T}$ denotes the policy that executes $\pi_i$ for all $T$ steps (i.e., $\pi^{(i)}_{1:T} = \pi_i$), and $\pi^{(i)}_{1:0}$ denotes the policy that executes the expert from the very beginning (i.e., $\pi^{(i)}_{1:0} = \pi^*$). Then,
\begin{subequations}
\begin{align}
&J(\pi_i)-J(\pi^*)=J(\pi^{(i)}_{1:T})-J(\pi^{(i)}_{1:0})\\
&= \sum_{t=1}^{T}\Bigl[J\bigl(\pi^{(i)}_{1:T-t+1}\bigr) - J\bigl(\pi^{(i)}_{1:T-t}\bigr)\Bigr] \quad\text{(telescoping sum)}\\
&= \sum_{t=1}^{T} \mathbb{E}_{s\sim d^t_{\pi_i}}\bigl[\,Q^{\pi^*}_{T-t+1}(s,\pi_i(s))\;-\;Q^{\pi^*}_{T-t+1}(s,\pi^*(s))\bigr] \\
&\le  u \sum_{t=1}^{T} \mathbb{E}_{s\sim d^t_{\pi_i}}\bigl[\ell(s,\pi_i)\bigr] \\
&=  u\,T\epsilon_i
\end{align}
\end{subequations}

Assume each iteration applies Follow-The-Leader: $\pi_{i}=\arg\!\min_{\pi\in\Pi}
\sum_{j=1}^{\,i-1}
\mathbb{E}_{s\sim d_{\pi_j}}\![\ell(s,\pi)]$, then the cumulative loss over ${n_{D}}$ iterations satisfies:
\begin{align}
&\sum_{i=1}^{n_{D}} \epsilon_i \le {n_{D}}\,\epsilon^* +\mathcal{O}\bigl(\ln {n_{D}}\bigr)\\
&\epsilon^* =\min_{\pi\in\Pi}\frac{1}{{n_{D}}}\sum_{i=1}^{n_{D}} \mathbb{E}_{s\sim d_{\pi_i}}\bigl[\ell(s,\pi)\bigr]
\end{align}

\newtheorem{remark}{Remark}
\begin{remark}[Choice of online learner]
We analyse VF-DAgger with FTL purely for convenience; any online learner with
$\mathcal{O}(\ln n_D)$ (strong-convex) regret would suffice.
We pick FTL because its logarithmic regret bound makes the
$uT\epsilon^* + \mathcal{O}(1)$ performance guarantee transparent,
even though VF-DAgger itself can use any batch supervised learner.
\end{remark}

Consequently, $\frac{1}{{n_{D}}}\sum_{i=1}^{n_{D}} \epsilon_i \;\le\;\epsilon^* + \tfrac{\mathcal{O}\!\bigl(\ln {n_{D}}\bigr)}{{n_{D}}}$, so there exists at least one iteration index $\hat i$ for which $\epsilon_{\hat
 i}\;\le\;\epsilon^* + \tfrac{\mathcal{O}\!\bigl(\ln {n_{D}}\bigr)}{{n_{D}}}$. Setting $n_{D} = \tilde{\mathcal{O}}(uT)$ (i.e., $n_{D}$ is proportional to $uT$, up to logarithmic factors), we obtain $\frac{\mathcal{O}\!\bigl(\ln n_{D}\bigr)}{n_{D}}
 \;=\;
\frac{\mathcal{O}\!\bigl(\ln(uT)\bigr)}{\tilde{\mathcal{O}}(uT)}
 \;=\;
\tilde{\mathcal{O}}\!\bigl(\tfrac{1}{T}\bigr)$. Choose the $\hat{i}$-th iterate ${\pi} = \pi_{\hat{i}}$; combining this selection with the single-iteration performance bound yields the desired result.
\begin{subequations}
\begin{align}
J({\pi})
&\le J(\pi^*) + u T \epsilon_{\hat
 i}\\
&\le J(\pi^*) + u T\bigl(\epsilon^* + \tilde{\mathcal{O}}\bigl(\tfrac1T\bigr) \bigr)\\
&= J(\pi^*) + u T \epsilon^* + uT\cdot \tilde{\mathcal{O}}\bigl(\tfrac1T\bigr) \\
&= J(\pi^*) + u T \epsilon^* + \mathcal{O}(1)
\end{align}    
\end{subequations}
\end{proof}

Theorem 2 demonstrates that after learning the neural value function via VF-DAGGER with ${n_{D}} = \tilde {\mathcal{O}}(uT)$ iterations, the performance error between the constructed policy and the original CBF-MPC policy scales sublinearly with task horizon $T$. This effectively resolves the quadratic growth issue of behavioral cloning in Lemma 1.

In the finite-sample regime, suppose that at each iteration $i$ we collect $l=\mathcal{O}(1)$ trajectories, forming the dataset $\mathcal{D}_i$. Let $\epsilon' \;=\; \min_{\pi\in\Pi}\;\frac{1}{{n_{D}}}\sum_{i=1}^{n_{D}} \mathbb{E}_{s\sim {\mathcal{D}_i}}\bigl[\ell(s,\pi)\bigr]$ denote the training loss of the best policy on the sampled trajectories. Then the following holds in the finite sample case.
\begin{theorem}\label{thm:3}
After ${n_{D}} = \mathcal{O}\bigl(u^2T^2 \,\log\tfrac{1}{\delta}\bigr)$ VF-DAGGER iterations, collecting $l=\mathcal{O}(1)$ trajectories at each round, there exists a policy ${\pi} \;\in\;{ \pi}_{1:{n_{D}}}$ such that, with probability at least $1-\delta$, $J( \pi)\le J(\pi^*)+uT\epsilon' + \mathcal{O}(1)$.
\end{theorem}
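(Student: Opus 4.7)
The plan is to lift the argument of Theorem 2 to the finite-sample regime by inserting a martingale concentration step between the empirical training losses and their population counterparts. The crucial observation is that the per-iteration performance bound $J(\pi_i)-J(\pi^*)\le uT\,\epsilon_i$, derived by the telescoping-sum/bounded-advantage calculation in the proof of Theorem 2, depends only on the policy's induced occupancy $d_{\pi_i}$ and the bounded cost-to-go constant $u$; it is insensitive to how $\epsilon_i=\mathbb{E}_{s\sim d_{\pi_i}}[\ell(s,\pi_i)]$ is estimated. What the finite-sample case changes is only our ability to relate $\tfrac{1}{n_D}\sum_i\epsilon_i$ to the in-hindsight empirical benchmark $\epsilon'$ defined over the sampled datasets.

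First I would set up the martingale. Let $\mathcal{F}_i$ denote the $\sigma$-algebra generated by $(\pi_1,\mathcal{D}_1,\ldots,\pi_{i-1},\mathcal{D}_{i-1},\pi_i)$, noting that $\pi_i$ is $\mathcal{F}_i$-measurable because the online learner selects it from past data. Since the $l=\mathcal{O}(1)$ trajectories composing $\mathcal{D}_i$ are drawn i.i.d.\ from $d_{\pi_i}$, the empirical loss $\hat\epsilon_i=\mathbb{E}_{s\sim\mathcal{D}_i}[\ell(s,\pi_i)]$ satisfies $\mathbb{E}[\hat\epsilon_i\mid\mathcal{F}_i]=\epsilon_i$. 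Normalising $\ell\in[0,1]$, the sequence $\{\epsilon_i-\hat\epsilon_i\}_{i=1}^{n_D}$ forms a bounded martingale difference sequence, and Azuma--Hoeffding gives, with probability at least $1-\delta$,
\begin{align*}
\Bigl|\tfrac{1}{n_D}\sum_{i=1}^{n_D}(\epsilon_i-\hat\epsilon_i)\Bigr|\le\mathcal{O}\!\bigl(\sqrt{\log(1/\delta)/n_D}\bigr).
\end{align*}

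Second I would apply the FTL $\mathcal{O}(\log n_D)$-regret guarantee (or any strongly-convex no-regret learner, per Remark 1) to the empirical stream to obtain $\tfrac{1}{n_D}\sum_i\hat\epsilon_i\le\epsilon'+\mathcal{O}(\log n_D/n_D)$. Adding the two inequalities yields
\begin{align*}
\tfrac{1}{n_D}\sum_{i=1}^{n_D}\epsilon_i\;\le\;\epsilon'+\mathcal{O}\!\bigl(\tfrac{\log n_D}{n_D}\bigr)+\mathcal{O}\!\bigl(\sqrt{\log(1/\delta)/n_D}\bigr).
\end{align*}
Pigeonhole produces an index $\hat i$ for which $\epsilon_{\hat i}$ is no larger than the right-hand side. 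Substituting $n_D=\mathcal{O}(u^2T^2\log(1/\delta))$ drives both deviation terms to $\mathcal{O}(1/(uT))$, and invoking the per-iteration bound $J(\pi_{\hat i})\le J(\pi^*)+uT\epsilon_{\hat i}$ multiplies these through to obtain the desired $J(\pi_{\hat i})\le J(\pi^*)+uT\epsilon'+\mathcal{O}(1)$ with probability at least $1-\delta$, taking $\pi=\pi_{\hat i}$.

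The hard part will be justifying the martingale structure rigorously, since the iterates $\pi_i$ are explicitly coupled across rounds: each one is a deterministic function of the previously sampled empirical losses, so neither the distributions $d_{\pi_i}$ nor the losses $\hat\epsilon_i$ are independent. Conditioning on $\mathcal{F}_i$ is exactly what freezes $\pi_i$ (and hence $d_{\pi_i}$) before the fresh draw of $\mathcal{D}_i$, restoring the unbiasedness needed for Azuma--Hoeffding. A secondary but routine item is verifying uniform boundedness of the surrogate loss, which is inherited from the same bounded-advantage hypothesis already invoked to define the constant $u$, and from the compactness of the state--action domain used throughout the CBF-MPC setup.
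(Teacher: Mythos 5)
Your proposal follows essentially the same route as the paper's proof: an Azuma--Hoeffding bound on the martingale $\sum_i(\epsilon_i-\hat\epsilon_i)$, the FTL $\mathcal{O}(\ln n_D)$-regret bound on the empirical losses, a pigeonhole selection of $\hat i$, the per-iteration bound $J(\pi_{\hat i})-J(\pi^*)\le uT\epsilon_{\hat i}$ from Theorem~2, and the choice $n_D=\mathcal{O}(u^2T^2\log(1/\delta))$ to absorb the deviation terms into $\mathcal{O}(1)$. Your explicit filtration argument for why $\{\epsilon_i-\hat\epsilon_i\}$ is a martingale difference sequence is a welcome addition of rigor that the paper leaves implicit, but the argument is otherwise identical.
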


\begin{proof}
Define per-iteration losses:
\begin{align}
\epsilon_i =
  \mathbb E_{s\sim d_{\pi_i}}\!\bigl[\ell(s,\pi_i)\bigr],
  \hat\epsilon_i =
\frac1{|\mathcal{D}_i|}\sum_{s\in\mathcal{D}_i}\ell(s,\pi_i)
\end{align}

With $\ell$ bounded in $[0,1]$, the difference $Z_i=\epsilon_i-\hat\epsilon_i$ satisfies $|Z_i|\le 1$. The martingale
$S_{n_D}=\sum_{i=1}^{n_D}Z_i$ obeys by Azuma-Hoeffding:
  $\Pr\!\Bigl(
        |S_{n_D}|
        >
        \sqrt{\tfrac12\,{n_D}\ln\tfrac2\delta}
     \Bigr)
  \;\le\;\delta
$.

Thus with probability $1-\delta$:
\begin{align}
   \sum_{i=1}^{n_D}\epsilon_i
   \;\le\;
   \sum_{i=1}^{n_D}\hat\epsilon_i
   +\sqrt{\tfrac12\,{n_D}\ln\tfrac2\delta}
\end{align}

Using Follow-The-Leader (or any no-regret learner with
$\mathcal{O}(\ln n_D)$ regret for strongly convex losses) on the
empirical losses,
\begin{align}
  \sum_{i=1}^{n_D}\hat\epsilon_i
  \;\le\;
  n_D\,\epsilon' + \mathcal{O}(\ln n_D)
\end{align}

Thus with probability $1-\delta$,
\begin{align}
\bar{\epsilon}\;:=\;
\frac{1}{n_D}\sum_{i=1}^{n_D}\epsilon_i
\;\le\;
\epsilon'
\;+\;
\mathcal{O}\!\Bigl(\tfrac{\ln n_D}{n_D}\Bigr)
\;+\;
\mathcal{O}\!\Bigl(\sqrt{\tfrac{\ln(1/\delta)}{n_D}}\Bigr)
\end{align}

By the mean-value principle, there exists at least one iteration index $\hat i$ for which $\epsilon_{\hat{i}} \;\le\; \bar{\epsilon}$. Using the single-step bound $J(\pi_i)-J(\pi^*)\le uT\,\epsilon_i$ and selecting ${\pi}=\pi_{\hat i}$ yields:
\begin{align}
J({\pi}) \;\le\; J(\pi^*) 
\;+\; uT\!\left(
      \epsilon' 
      \;+\; \mathcal{O}\!\Bigl(\tfrac{\ln n_D}{n_D}\Bigr)
      \;+\; \mathcal{O}\!\Bigl(\sqrt{\tfrac{\ln(1/\delta)}{n_D}}\Bigr)
\right)
\end{align}

Choose $n_D=C\,u^{2}T^{2}\log(1/\delta)$
for a suitable constant $C>0$. Then
$
\frac{\ln n_D}{n_D} = \mathcal{O}\!\Bigl(\tfrac{1}{T^{2}}\Bigr), 
\quad
\sqrt{\frac{\ln(1/\delta)}{n_D}} = \mathcal{O}\!\Bigl(\tfrac{1}{T}\Bigr)
$. Multiplying by $uT$ yields $
uT \,\cdot\, \mathcal{O}\!\Bigl(\tfrac{1}{T^{2}}\Bigr)
   =
   \mathcal{O}\!\Bigl(\tfrac{1}{T}\Bigr),
\quad
uT \,\cdot\, \mathcal{O}\!\Bigl(\tfrac{1}{T}\Bigr)
   =
   \mathcal{O}(1)$. Substituting this into the previous inequality completes the proof.
\end{proof}

\begin{remark}[Trajectory count $l$]
When $l=\mathcal{O}(1)$, $l$-dependent terms are absorbed into $\mathcal{O}(1)$. Retaining the $l$-dependence explicitly shows that the required iterations scale as $n_{D}=\mathcal{O}\!\left(\frac{u^{2}T^{2}}{l}\ln\frac{1}{\delta}\right)$, while the additive error decays as $\mathcal{O}\!\left(l^{-1/2}\right)$. Hence, increasing $l$ both accelerates training and tightens the guarantees.
\end{remark}

A comparison of \hyperref[lem:1]{Lemma 2}, \hyperref[thm:2]{Theorem 2}, and \hyperref[thm:3]{Theorem 3} clearly highlights VF-DAGGER's advantage in addressing distribution shift.

\subsection{Probabilistic Practical Exponential Stability}

\begin{assumption} There exist constants $\alpha_1, \alpha_2, c > 0$ such that: 
\begin{align}
\alpha_1 \|x\|^2 \le V_{\text{MPC}}(x, \theta_{\text{nom}}) \le \alpha_2 \|x\|^2\\
\Delta V_{\text{MPC}}(x, \theta_{\text{nom}}) \le -c \|x\|^2
\end{align}
\end{assumption}

\begin{assumption}
\begin{align}
\Pr\left( \|e_V(x)\| \le \varepsilon_V \right) \ge 1 - \delta\\
\Pr\left( \|e_{\nabla}(x)\| \le \varepsilon_{\nabla} \right) \ge 1 - \delta
\end{align}
where $e_V(x):=V_{\text{NN}}(x)-V_{\text{MPC}}(x,\theta_{\text{nom}})$, $e_{\nabla}(x):=\nabla_{V_{\text{NN}}}(x)-\left.\frac{\partial}{\partial\theta}V_{\text{MPC}}(x,\theta)\right|_{\theta_{\text{nom}}}$.
\end{assumption}

\begin{assumption} There exists $\gamma>0$ such that: 
\begin{align}
\|\phi(x_{k+1}, \theta) - \phi(x_k, \theta)\|
&\le \gamma\,\bigl\|V_{\text{BAN-MPC}}(x_k, \theta)\notag\\
&\qquad\quad - V_{\text{MPC}}(x_k, \theta)\bigr\|
\end{align}
where $\phi(x, \theta) = V_{\text{BAN-MPC}}(x, \theta) - V_{\text{MPC}}(x, \theta)$.
\end{assumption}

\begin{assumption} There exists $M_\theta > 0$ such that the parameter deviation from the nominal value satisfies: $\|\theta - \theta_{\text{nom}}\| \leq M_\theta$, and within this bound, the active constraint set remains unchanged.
\end{assumption}

\begin{theorem}[Probabilistic Practical Exponential Stability]\label{thm:4} 
If Assumptions 3-6 hold, when training with the VF-DAGGER algorithm (with ${n_D} = \mathcal{O}\left( u^2 T^2 \ln \frac{1}{\delta} \right)$ epochs and $l=\mathcal{O}(1)$ trajectories per epoch), there exists $\gamma \in \left(0, \frac{c}{\alpha_2} \right)$ such that, with probability at least $(1 - \delta)^2$, the closed-loop system enjoys probabilistic practical exponential stability:
$
\|x_k\| \le \kappa \|x_0\| e^{-\lambda k} + r,\forall\, k \ge 0
$, where the convergence rate is given by $\lambda = \frac{\gamma \alpha_2}{c - \gamma \alpha_2} > 0$, the radius of attraction is $r = \sqrt{\frac{\Gamma_V + \Gamma_\theta}{c - \gamma \alpha_2}}$, with $\Gamma_V = \gamma \varepsilon_V$ denoting the value function approximation error term and $\Gamma_\theta = \left[ 2L + \gamma (\varepsilon_{\nabla} + L_V) \right] \|\theta - \theta_{\text{nom}}\|$ representing the parameter sensitivity term.
\end{theorem}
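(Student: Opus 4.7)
The plan is to use $V_{\text{MPC}}(\cdot,\theta_{\text{nom}})$ as a Lyapunov function for the BAN-MPC closed loop and derive a one-step geometric decrease of the form $V_{\text{MPC}}(x_{k+1},\theta_{\text{nom}}) \le (1-\eta)V_{\text{MPC}}(x_k,\theta_{\text{nom}}) + C$ on a high-probability event. Iterating this recursion and inverting the sandwich of Assumption 3 converts it into the claimed estimate $\|x_k\|\le\kappa\|x_0\|e^{-\lambda k}+r$. The VF-DAGGER iteration counts in the theorem statement are exactly those of Theorem 3, and running the same algorithm on the sensitivity dataset $\hat{\mathcal{D}}$ supplies the $\varepsilon_V$ and $\varepsilon_{\nabla}$ of Assumption 4; a union bound on the two parts of Assumption 4 then delivers the probability $(1-\delta)^2$, after which the argument becomes deterministic on the good event.

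\textbf{Key steps.} First, I would quantify the value-function mismatch $\phi(x,\theta):=V_{\text{BAN-MPC}}(x,\theta)-V_{\text{MPC}}(x,\theta)$. Taylor-expanding $V_{\text{MPC}}(x,\cdot)$ around $\theta_{\text{nom}}$, which is valid inside the $M_\theta$-neighborhood of Assumption 6 by the $C^2$ smoothness of Assumption 1, gives
\begin{align*}
\phi(x,\theta)=e_V(x)+e_{\nabla}(x)(\theta-\theta_{\text{nom}})-R(x,\theta),
\end{align*}
with $\|R(x,\theta)\|\le L\|\theta-\theta_{\text{nom}}\|^2$ from Lipschitz continuity of $\partial V_{\text{MPC}}/\partial\theta$. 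Combined with the probabilistic bounds in Assumption 4, this yields a uniform bound on $\|\phi(x,\theta)\|$ from which $\Gamma_V$ and $\Gamma_\theta$ will eventually be read off. Second, I would telescope
\begin{align*}
V_{\text{MPC}}(x_{k+1},\theta_{\text{nom}})-V_{\text{MPC}}(x_k,\theta_{\text{nom}})
\end{align*}
into (i) the nominal-MPC decrement $\le -c\|x_k\|^2$ from Assumption 3, (ii) the mismatch increment $\phi(x_{k+1},\theta)-\phi(x_k,\theta)$, whose norm is bounded by $\gamma\|\phi(x_k,\theta)\|$ via Assumption 5, and (iii) the two parameter-correction terms $V_{\text{MPC}}(x_j,\theta_{\text{nom}})-V_{\text{MPC}}(x_j,\theta)$ for $j=k,k+1$, each bounded by $L_V\|\theta-\theta_{\text{nom}}\|$ from the Lipschitz constant $L_V$ of $V_{\text{MPC}}$ in $\theta$. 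Using the upper sandwich $-c\|x_k\|^2\le -c\,V_{\text{MPC}}(x_k,\theta_{\text{nom}})/\alpha_2$ and absorbing the A5 contribution as $\gamma V_{\text{MPC}}(x_k,\theta_{\text{nom}})$ produces the contraction factor $1-(c-\gamma\alpha_2)/\alpha_2$, valid precisely when $\gamma<c/\alpha_2$, while collecting the remaining additive terms yields exactly $\Gamma_V=\gamma\varepsilon_V$ and $\Gamma_\theta=[2L+\gamma(\varepsilon_{\nabla}+L_V)]\|\theta-\theta_{\text{nom}}\|$. Iterating, bounding $(1-\eta)^k\le e^{-\eta k}$, and applying the lower sandwich $\alpha_1\|x_k\|^2\le V_{\text{MPC}}(x_k,\theta_{\text{nom}})$ then produces the stated bound with $\kappa$ absorbing $\sqrt{\alpha_2/\alpha_1}$.

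\textbf{Main obstacle.} The delicate step is the second one: Assumption 3's decrease is stated for the nominal-MPC closed loop, while the actual trajectory is driven by BAN-MPC on the parameter-perturbed system. Assumption 5 is the crucial bridge, since it bounds the one-step growth of the optimality-gap proxy $\phi$ by a $\gamma$-contraction and thereby prevents the BAN-vs-MPC mismatch from accumulating along the horizon; without it, the $\gamma$ entering both $\lambda$ and $r$ would be uncontrolled. The remaining subtlety is to simultaneously absorb the NN approximation errors, the Taylor remainder, the $V_{\text{MPC}}$-Lipschitz correction, and the A5 increment into a single contractive-plus-additive recursion without double counting, so that the final constants match exactly $\Gamma_V$, $\Gamma_\theta$, $\lambda$, and $r$ as advertised.
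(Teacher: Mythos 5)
Your proposal follows essentially the same architecture as the paper's proof: condition on the intersection of the two events in Assumption 4 to get probability $(1-\delta)^2$, bound the mismatch $\phi(x,\theta)=V_{\text{BAN-MPC}}(x,\theta)-V_{\text{MPC}}(x,\theta)$ by $\varepsilon_V+(\varepsilon_{\nabla}+L_V)\|\theta-\theta_{\text{nom}}\|$ via the first-order sensitivity expansion, decompose the one-step Lyapunov difference into the nominal decrement of Assumption 3, the mismatch increment controlled by Assumption 5, and the parameter-Lipschitz corrections contributing $2L\|\theta-\theta_{\text{nom}}\|$, then iterate a geometric recursion and invert the quadratic sandwich. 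The only structural deviations are cosmetic (you take $V_{\text{MPC}}(\cdot,\theta_{\text{nom}})$ as the Lyapunov candidate, whereas the paper uses $V_{\text{BAN-MPC}}$ and therefore needs the shifted sandwich $\alpha_1\|x\|^2-D\le V_{\text{BAN-MPC}}\le\alpha_2\|x\|^2+D$), plus one accounting step that does not close as written.

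That step is your treatment of the Assumption 5 term. Assumption 5 bounds $\|\phi(x_{k+1},\theta)-\phi(x_k,\theta)\|$ by $\gamma\|\phi(x_k,\theta)\|=\gamma\,\eta(x_k,\theta)$, which is a \emph{constant} additive contribution (bounded by $\gamma\varepsilon_V+\gamma(\varepsilon_{\nabla}+L_V)\|\theta-\theta_{\text{nom}}\|$); it is not proportional to $V_{\text{MPC}}(x_k,\theta_{\text{nom}})$, so absorbing it "as $\gamma V_{\text{MPC}}(x_k,\theta_{\text{nom}})$" to manufacture the contraction factor $1-(c-\gamma\alpha_2)/\alpha_2$ is not licensed by the assumption (it would require $\|\phi(x_k,\theta)\|\le V_{\text{MPC}}(x_k,\theta_{\text{nom}})$, which is nowhere assumed). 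Moreover, if you spend the Assumption 5 term on the contraction factor, you cannot also collect it as the additive constants $\Gamma_V=\gamma\varepsilon_V$ and the $\gamma(\varepsilon_{\nabla}+L_V)\|\theta-\theta_{\text{nom}}\|$ piece of $\Gamma_\theta$ --- those additive terms \emph{are} the bound on $\gamma\eta$, so your derivation double-counts exactly the quantity you flag as the delicate point. The paper's route is the opposite: $\gamma\eta$ goes entirely into the additive constant $\Gamma_V+\Gamma_\theta$, giving $\Delta V_{\text{BAN-MPC}}\le -c\|x_k\|^2+\Gamma_V+\Gamma_\theta$, and the quantity $c-\gamma\alpha_2$ enters only through the chosen definition of the attraction radius $r$ (which then forces $\gamma<c/\alpha_2$ and yields $\lambda=\gamma\alpha_2/(c-\gamma\alpha_2)$). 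Note also that your contraction factor would give a decay rate of order $(c-\gamma\alpha_2)/\alpha_2$ rather than the advertised $\lambda=\gamma\alpha_2/(c-\gamma\alpha_2)$, so the constants would not match the statement. Finally, your quadratic Taylor remainder $L\|\theta-\theta_{\text{nom}}\|^2$ is harmless but should be relaxed to the linear bound $L_V\|\theta-\theta_{\text{nom}}\|$ used in the paper if you want $\Gamma_\theta$ to come out exactly as stated.
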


\begin{proof}

\textbf{1) Neural Network Approximation Error Convergence}

By \hyperref[thm:2]{Theorem 2} and \hyperref[thm:3]{Theorem 3} of VF-DAGGER, there exists a policy $\pi$ such that the single-step expected errors satisfy $\|e_V(x)\| \le u T \varepsilon = \mathcal{O}(1)$, and $\|e_{\nabla}(x)\| \le u_{\nabla} T \varepsilon = \mathcal{O}(1)$ with probability $1 - \delta$. Increasing the training epochs $n_D$ and trajectories per epoch $l$ yields the error decay $\varepsilon_V = \mathcal{O}\left( \frac{1}{T} \right)$ and $\varepsilon_{\nabla} = \mathcal{O}\left( \frac{1}{T} \right)$.

\textbf{2) Value Function Error Bound}

From \cite{kk1} on MPC sensitivity, there exists a constant $L_V>0$ such that:
\begin{align}
\|V_{\text{MPC}}(x,\theta_{\text{nom}})+\left.\frac{\partial}{\partial\theta}V_{\text{MPC}}(x,\theta)\right|_{\theta_{\text{nom}}}(\theta-\theta_{\text{nom}})\notag\\
-V_{\text{MPC}}(x,\theta)\|\leq L_V||\theta-\theta_{\text{nom}}||
\end{align}

Within the event $E = \left\{\, |e_V| \le \varepsilon_V \;\cap\; \|e_{\nabla}\| \le \varepsilon_{\nabla} \,\right\}$, the value function error between BAN-MPC and the original CBF-MPC can be expressed as:
\begin{subequations}
\begin{align}
\eta(x, \theta)
&=\|V_{\text{BAN-MPC}}(x, \theta) - V_{\text{MPC}}(x, \theta)\|\\
&= \|V_{\text{NN}}(x) + \nabla_{V_{\text{NN}}}(x)(\theta - \theta_{\text{nom}}) - V_{\text{MPC}}(x, \theta)\| \\
&= \|e_{V}(x) + V_{\text{MPC}}(x, \theta_{\text{nom}}) + ( \frac{\partial}{\partial \theta} V_{\text{MPC}}(x, \theta) \bigg|_{\theta_{\text{nom}}} \nonumber\\
&\qquad + e_{\nabla}(x) )(\theta - \theta_{\text{nom}}) - V_{\text{MPC}}(x, \theta)\| \\
&\leq \|e_{V}(x)\| + \|e_{\nabla}(x)(\theta - \theta_{\text{nom}})\| 
+ \|V_{\text{MPC}}(x, \theta_{\text{nom}})\nonumber \\
&\qquad+ \frac{\partial}{\partial \theta} V_{\text{MPC}}(x, \theta) \bigg|_{\theta_{\text{nom}}} (\theta - \theta_{\text{nom}}) - V_{\text{MPC}}(x, \theta)\| \\
&\leq \|e_{V}(x)\| + \|e_{\nabla}(x)(\theta - \theta_{\text{nom}})\| + L_V\|\theta - \theta_{\text{nom}}\|\\
&\leq \varepsilon_V+(\varepsilon_{\nabla} +L_V)||\theta-\theta_{\text{nom}}||
\end{align}
\end{subequations}
This event occurs with probability $\Pr(E) \ge (1 - \delta)^2$, and all subsequent derivations are conditioned on $E$.

\textbf{3) Lyapunov Difference Decomposition}

Define error function $\phi(x, \theta) = V_{\text{BAN-MPC}}(x, \theta) - V_{\text{MPC}}(x, \theta)$, $\Delta V_{\text{BAN-MPC}}(x_k)= V_{\text{BAN-MPC}}(x_{k+1}) - V_{\text{BAN-MPC}}(x_k)$, and $\Delta V_{\text{MPC}}(x_k)= V_{\text{MPC}}(x_{k+1}) - V_{\text{MPC}}(x_k)$. Then:
\begin{subequations}
\begin{align}
&\Delta V_{\text{BAN-MPC}}(x_k, \theta)= V_{\text{BAN-MPC}}(x_{k+1}, \theta) - V_{\text{BAN-MPC}}(x_k, \theta)\\
&=\Delta V_{\text{MPC}}(x_k, \theta) + \| \phi(x_{k+1}, \theta) - \phi(x_k, \theta) \|\\
&\le \Delta V_{\text{MPC}}(x_k,\theta) + \gamma\,\|V_{\text{BAN-MPC}}(x_k, \theta) - V_{\text{MPC}}(x_k, \theta)\|\\
&= \underbrace{\Delta V_{\text{MPC}}(x_k,\theta)}_{\text{Term A}}+\underbrace{\gamma\, \eta(x_k,\theta)}_{\text{Term B}}
\end{align}
\end{subequations}

From \cite{kk1}, there exists $L$ such that: $
\|V_{\text{MPC}}(x,\theta)-V_{\text{MPC}}(x,\theta_{\text{nom}})\|
\leq L||\theta-\theta_{\text{nom}}||$. Combining with nominal stability:
\begin{subequations}
\begin{align}
&\Delta V_{\text{MPC}}(x_k, \theta)= V_{\text{MPC}}(x_{k+1}, \theta) - V_{\text{MPC}}(x_k, \theta) \\
&=
\Big[
V_{\text{MPC}}(x_{k+1}, \theta_{\text{nom}})
- V_{\text{MPC}}(x_k, \theta_{\text{nom}})
\Big] \nonumber\\
&\qquad+\Big[
V_{\text{MPC}}(x_{k+1}, \theta)- V_{\text{MPC}}(x_{k+1}, \theta_{\text{nom}})
\Big] \nonumber\\
&\qquad-\Big[
V_{\text{MPC}}(x_k, \theta)
- V_{\text{MPC}}(x_k, \theta_{\text{nom}})
\Big]\\
&\le -c \|x_k\|^2 + 2 L \|\theta - \theta_{\text{nom}}\|
\end{align}
\end{subequations}
Then:
\begin{subequations}
\begin{align}
\Delta V_{\text{BAN-MPC}} 
&\le-c \|x_k\|^2
+ 2L \|\theta - \theta_{\text{nom}}\|\nonumber
\\
&\qquad+ \gamma \left[ \varepsilon_V + (\varepsilon_{\nabla} + L_V)\|\theta - \theta_{\text{nom}}\| \right]\\
&\le -c \|x_k\|^2+\gamma \varepsilon_V\nonumber\\
&\qquad+ 
\left[ 2L + \gamma(\varepsilon_{\nabla} + L_V)\right]\|\theta - \theta_{\text{nom}}\| \\
&= -c \|x_k\|^2 + \Gamma_V + \Gamma_\theta
\end{align}
\end{subequations}
where $\Gamma_V = \gamma \varepsilon_V$, $\Gamma_\theta = \left[ 2L + \gamma (\varepsilon_{\nabla} + L_V) \right] \|\theta - \theta_{\text{nom}}\|$.

\textbf{4) Stability Condition Construction}

Define attraction radius $r = \sqrt{\frac{\Gamma_V + \Gamma_\theta}{c - \gamma \alpha_2}}$ ($
\gamma \in \left(0, \frac{c}{\alpha_2} \right)$). When $\|x_k\| > r$:
\begin{subequations}
\begin{align}
\Delta V_{\text{BAN-MPC}}
&\le -c \|x_k\|^2 + \Gamma_V + \Gamma_\theta \\
&< -c r^2 + \Gamma_V + \Gamma_\theta \\
&= -c \left( \frac{\Gamma_V + \Gamma_\theta}{c - \gamma \alpha_2} \right) + \Gamma_V + \Gamma_\theta \\
&= (\Gamma_V + \Gamma_\theta) \left( \frac{-\gamma \alpha_2}{c - \gamma \alpha_2} \right)\\
&< 0
\end{align}
\end{subequations}

\textbf{5) Exponential Decay Construction}

The value function satisfies two-sided inequalities: $\alpha_1 \|x\|^2 - D \;\le\; V_{\text{BAN-MPC}}(x, \theta) \;\le\; \alpha_2 \|x\|^2 + D$, where $D = \varepsilon_V + (\varepsilon_{\nabla} + L_V) M_\theta + L M_\theta$ is the total error bound. The Lyapunov function evolution obeys:
$
V_{\text{BAN-MPC}}(x_{k+1}, \theta)
\le V_{\text{BAN-MPC}}(x_k, \theta) - \lambda (\Gamma_V + \Gamma_\theta)$, 
with $\lambda = \frac{\gamma \alpha_2}{c - \gamma \alpha_2} > 0$, and further: $
\Delta V_{\text{BAN-MPC}} \le -\lambda \alpha_1 \|x_k\|^2 + \lambda D$. Establishing the recurrence relation: $
V_{\text{BAN-MPC}}(x_{k+1}, \theta)
\le \left( 1 - \frac{\lambda \alpha_1}{\alpha_2} \right) V_{\text{BAN-MPC}}(x_k, \theta)
+ \lambda D$, where $
\beta = 1 - \frac{\lambda \alpha_1}{\alpha_2} \in (0, 1)$. The recursive solution is:
\begin{subequations}
\begin{align}
V_{\text{BAN-MPC}}(x_k, \theta) 
&\le \beta^k V_{\text{BAN-MPC}}(x_0, \theta)
+ \lambda D \sum_{i=0}^{k-1} \beta^i\\
&\le \beta^k V_{\text{BAN-MPC}}(x_0, \theta)
+ \frac{\lambda D }{1 - \beta}\\
&= \beta^k V_{\text{BAN-MPC}}(x_0, \theta)
+ \frac{ D \alpha_2}{\alpha_1}\\
\end{align}
\end{subequations}

\textbf{6) State Norm Bound}

Combining the Lyapunov bounds, the state norm satisfies:
\begin{align}
\alpha_1 \|x_k\|^2 - D &\le \beta^k (\alpha_2 \|x_0\|^2 + D) + \frac{D \alpha_2}{\alpha_1} \nonumber\\
\Rightarrow \quad
\|x_k\|^2 &\le
\underbrace{
\frac{\alpha_2}{\alpha_1} \beta^k \|x_0\|^2
}_{\text{exponential term}}
+
\underbrace{
\frac{D}{\alpha_1} \beta^k +
\frac{D \alpha_2}{\alpha_1^2} +
\frac{D}{\alpha_1}
}_{\text{bias term}}
\end{align}
Let $\kappa = \sqrt{\frac{\alpha_2}{\alpha_1}}$, $\beta^k = \left(1 - \frac{\lambda \alpha_1}{\alpha_2}\right)^k \le e^{-\lambda k}$. As $T \rightarrow \infty$, the bias term is $\mathcal{O}(r^2)$, yielding rigorously: 
\begin{align}
\|x_k\| \le \kappa \|x_0\| e^{-\lambda k} + r
\end{align}
The above holds within event $E$ with $\Pr(E) \ge (1 - \delta)^2$, proving practical exponential stability with probability at least $
(1 - \delta)^2$.

\end{proof}

\begin{figure*}[ht]
\centering
    \includegraphics[width=\linewidth]{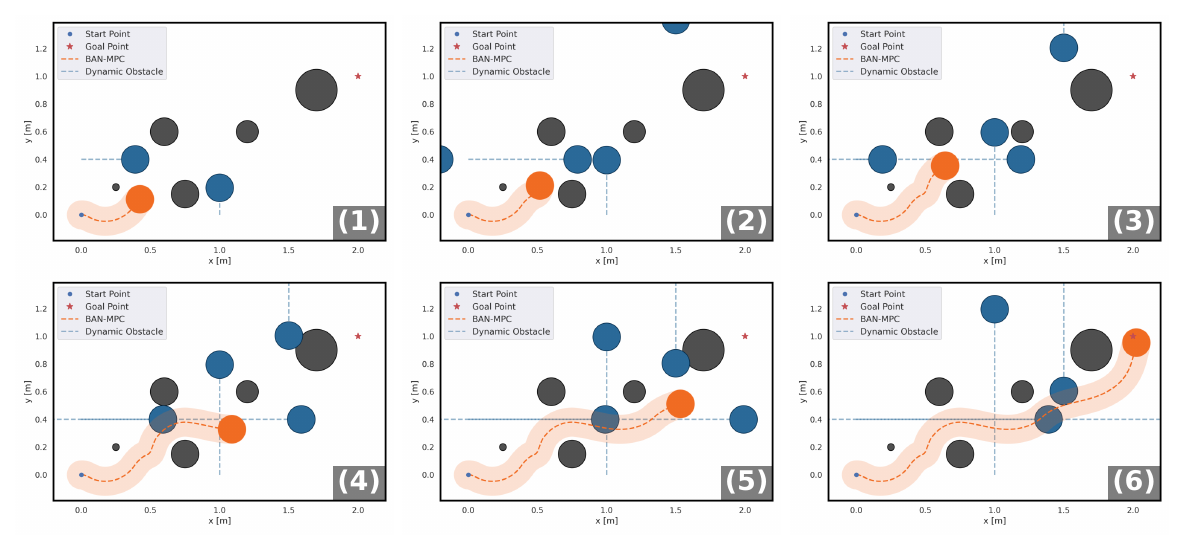}
    \vspace{-2mm}
    \caption{Dynamic obstacle-avoidance task using BAN-MPC} 
    \vspace{-2mm}
    \label{fg:Figure_9}
\end{figure*}

\subsection{Dynamic Obstacle-Avoidance Task}

To demonstrate the framework's scalability, we extend the unicycle navigation task by introducing 4 circular dynamic obstacles, validating BAN-MPC's obstacle avoidance capability in hybrid static-dynamic scenarios. The implementation comprises three key steps: 
\begin{itemize}

\item Online integration of a trajectory prediction module to estimate time-varying obstacle centers $o(t)$;

\item Reconstruction of safety functions into time-varying formulations $H^{(m)}(x,t)$; 

\item Explicit incorporation of temporal dimensions into CBF constraints $\Delta H({x}_k,{u}_k,t_k)+\gamma H({x}_k,t_k)\geq0$. 
\end{itemize}

Experimental results confirm that BAN-MPC with dynamic CBFs successfully avoids both static and dynamic obstacles while reaching the target, as depicted in Fig. \ref{fg:Figure_9}.

\end{document}